\setlist[enumerate]{topsep=1ex,itemsep=0ex,partopsep=0ex,parsep=1ex}
\setlist[itemize]{topsep=1ex,itemsep=0ex,partopsep=0ex,parsep=1ex}
\DeclareMathOperator{\tr}{tr}
\newcommand{\SE}{\mathrm{SE}}
\theoremstyle{definition}
\newcommand{\PLH}{{\mkern-2mu\times\mkern-2mu}}
\begin{document}

\pagestyle{headings}
\mainmatter
\def\ECCVSubNumber{770}  % Insert your submission number here

\title{Relative Pose Estimation of Calibrated Cameras with Known $\SE(3)$ Invariants}

% INITIAL SUBMISSION 
\begin{comment}
\titlerunning{ECCV-20 submission ID \ECCVSubNumber} 
\authorrunning{ECCV-20 submission ID \ECCVSubNumber} 
\author{Anonymous ECCV submission}
\institute{Paper ID \ECCVSubNumber}

\end{comment}
%******************

% CAMERA READY SUBMISSION
% \begin{comment}
% \titlerunning{Relative Pose Estimation of Calibrated Cameras with Known $\SE(3)$ Invariants}
% If the paper title is too long for the running head, you can set
% an abbreviated paper title here
%
\author{Bo Li\inst{1}\orcidID{0000-0002-9336-1862},
Evgeniy	Martyushev\inst{2}\orcidID{0000-0002-6892-079X}, \\
Gim Hee Lee\inst{1}\orcidID{0000-0002-1583-0475}}
%
% \authorrunning{Bo Li \and Evgeniy Martyushev \and Gim Hee Lee}
% First names are abbreviated in the running head.
% If there are more than two authors, 'et al.' is used.
%
\institute{National University of Singapore, Singapore\\
\email{prclibo@gmail.com, gimhee.lee@nus.edu.sg}\and
South Ural State University, 454080 Chelyabinsk, Russia\\
\email{martiushevev@susu.ru}}
% \end{comment}
%******************

\maketitle

\begin{abstract}

    The $\SE(3)$ invariants of a pose include its rotation angle and screw translation. In this paper, we present a complete comprehensive study of the relative pose estimation problem for a calibrated camera constrained by known $\SE(3)$ invariant, which involves 5 minimal problems in total. These problems reduces the minimal number of point pairs for relative pose estimation and improves the estimation efficiency and robustness. The $\SE(3)$ invariant constraints can come from extra sensor measurements or motion assumption. Different from conventional relative pose estimation with extra constraints, no extrinsic calibration is required to transform the constraints to the camera frame. This advantage comes from the invariance of $\SE(3)$ invariants cross different coordinate systems on a rigid body and makes the solvers more convenient and flexible in practical applications.
    Besides proposing the concept of relative pose estimation constrained by $\SE(3)$ invariants, we present a comprehensive study of existing polynomial formulations for relative pose estimation and discover their relationship. Different formulations are carefully chosen for each proposed problems to achieve best efficiency. Experiments on synthetic and real data shows performance improvement compared to conventional relative pose estimation methods.

% The $\SE(3)$ invariants of a pose include its rotation angle and screw translation. In this paper, we study the relative pose estimation problem for a calibrated camera in cases where 1) the rotation angle is known and/or 2) the screw translation is known to be zero. 
% The first case applies to robotic systems with available rotation sensors such as the Inertial Measurement Units (IMUs). The second case applies to robotic systems under planar motion constraint. Involving $\SE(3)$ invariant measurements reduces the minimal number of point pairs for relative pose estimation and improves the numerical accuracy. Moreover, since the $\SE(3)$ invariants do not change under a rigid body transform, there is no requirement on the camera mounting position or extrinsic calibration between the camera and the rotation sensor.
% To this end, we propose two new relative pose estimators based on the $\SE(3)$ invariant measurements. The corresponding minimal problems are formulated in terms of polynomial equations, and the hidden variable method is used to efficiently find their solutions. We also choose among several existing formulations of the relative pose problem to make the estimators as efficient as possible. The performance of the proposed algorithms is evaluated in a series of experiments on synthetic and real data.

%\keywords{Multiview geometry \and Relative pose estimation \and $\SE(3)$ invariants \and Relative rotation angle \and Screw translation}

\end{abstract}

\section{Introduction}
\label{sec:intro}
Minimal relative pose solver of a camera is a fundamental component in modern 3D vision applications including robot localization and mapping, augmented reality, autonomous driving, 3D modeling, etc. Well-known solvers include the 7-point algorithm~\cite{HZ} and the 5-point algorithm~\cite{Nister,SEN06}. It is generally admitted that an $n$-point solver with smaller $n$ performs more robustly, has less degenerate configurations and requires less iterations when integrated in a RANSAC framework.

% This paper considers two scenarios where the minimal number of points can be further reduced: 
% \begin{enumerate}
%     \item A system with a camera and an orientation sensor such as an IMU.
%     \item A system with a camera and moves on a plane.
% \end{enumerate}
% In practice, these scenarios widely exist in recent applications of indoor robots, autonomous vehicles and UAVs. 

As the first contribution of this paper, we show that two measurements -- rotation angle and screw translation -- can be respectively integrated into relative pose solvers to reduce the number $n$ of minimal points. Typical scenarios for these measurements include a robot equipped with a camera and an IMU, and a robot with planar motion. These measurements are referred as $\SE(3)$ invariants as they stay invariant cross different coordinate systems on a rigid body. Consequently, the proposed methods do not require known extrinsic pose of the camera with respect to the IMU or the motion plane, which is an important advantage over previous relative pose estimation methods \cite{FTP10,Kalantari-vert,LHLP,Saurer2017,Choi18,Scaramuzza11,Lee-vert}. This advantage make the proposed methods more flexible and convenient. For example, when estimating visual odometry to hand-eye calibrate the camera-IMU extrinsics, the proposed methods improve trajectory estimation even though the extrinsics are unavailable. For robot systems subjected to long term operation, the proposed methods avoid re-calibration. All the 5 minimal problems introduced by different combination of $\SE(3)$ invariants as constraints are comprehensively studied in this paper.

The second contribution is a comprehensive study of all existing polynomial formulations for relative pose solvers. We show pros and cons of each formulation under different minimal problem settings and reveal connections between the formulations. For each proposed relative pose problem with $\SE(3)$ invariants, we evaluate these formulations and propose solvers with the best efficiency.

\section{Related Works}

% We first briefly overview previous works on relative pose estimation.
A fundamental matrix for a pair of pinhole cameras has 7 DoF and can be estimated minimally from 7 and linearly from 8 point correspondences~\cite{hartley95,HZ,longuet81}. If all camera intrinsics except a common focal length are calibrated, then the estimation can be reduced to the 6-point algorithm~\cite{Hartley2012,KBP08,SNKS}. If the focal length is also calibrated, the 5-point algorithm~\cite{Nister,SEN06} is naturally introduced.

Beyond 5-point solution, extra constraints can be exploited. With known gravity direction measured by IMU or by the knowledge of motion plane, \cite{FTP10,Kalantari-vert,Saurer2017,Choi18} obtain two rotation angles and hence reduce the minimal number of point pairs to~$3$. Camera extrinsics are required to be calibrated for these methods to transform the two angles to the camera frame. In~\cite{Scaramuzza11}, it is assumed that the camera follows the Ackermann motion. In this case only one point is needed for pose estimation. This method requires the camera to be specifically mounted. Paper~\cite{LHLP} proposes the 4-point algorithm given a known rotation angle measurement from other sensors. This is the first work on integrating $\SE(3)$ invariants in relative pose estimation. The known rotation angle can also be used for the camera self-calibration as demonstrated in~\cite{Mart18}. As mentioned in the previous section, extrinsic calibration is not required to use $\SE(3)$ invariants. 

Minimal problems are usually formulated in terms of multivariate polynomial systems and a plenty of methods have been proposed to solve these systems. Some of these methods make use of the Gr\"obner basis computation~\cite{KBP08,Kneip12,Mart18,MartLi19,SEN06}. The roots are then derived from the eigenvectors of the so-called action matrix constructed from the Gr\"obner basis~\cite{CLS}. Besides action matrix, alternative matrix decomposition methods were also proposed including PolyEig~\cite{KBP} and QuEst~\cite{Ramirez18}. To avoid significant computational cost of matrix decomposition, the hidden variable approach has been used in several solvers~\cite{Hartley2012,Nister}. This approach reduces the problem to finding real roots of a univariate polynomial.

\section{Preliminaries}

\subsection{Notation}

We preferably use $\alpha$, $\beta$, \dots for scalars, $a$, $b$, \dots for column 3-vectors, and $A$, $B$, \dots for matrices. For a matrix $A$, the transpose is $A^\top$, the determinant is $\det A$, and the trace is $\tr A$. For two 3-vectors $a$ and $b$ the cross product is $a \times b$. For a vector $a$, the entries are $a_i$, the notation $[a]_\times$ stands for the skew-symmetric matrix such that $[a]_\times b = a \times b$ for any vector~$b$. We use $I$ for the identity matrix and $\| \cdot \|$ for the Frobenius norm. A notation $\mathsf{f}^*_*$ is used to refer a polynomial.

% \subsection{Rotation Matrices}

A rotation matrix $R$ can be represented by a unit quaternion $[\sigma\ u^\top]$ as follows
\begin{equation}
R = 2 (u u^\top - \sigma [u]_\times) + (\sigma^2 - \|u\|^2) I,
\label{eq:quat-to_rot}
\end{equation}
where
\begin{equation}
\mathsf{f}^\sigma := \|u\|^2 + \sigma^2 - 1 = 0.
\label{eq:unit-quaternion}
\end{equation}
With $\theta$ as the rotation angle, we have
\begin{equation}
\tr R = 4 \sigma^2 - 1 = 2 \cos\theta + 1,
\label{eq:rotation-trace}
\end{equation}

Another way to represent a rotation matrix $R$ comes from the Cayley transform if and only if it is not a rotation through an angle $\pi + 2\pi k$ for a certain integer~$k$.
\begin{equation}
R = (I - [v]_\times)(I + [v]_\times)^{-1},
\label{eq:cayley}
\end{equation}
where $v = u/\sigma$ is a 3-vector.

The special Euclidean group $\SE(3)$ consists of all orientation-preserving rigid motions of 3-dimensional Euclidean space. Any element $H \in \SE(3)$ can be represented by a $4\times 4$ matrix of the form
\begin{equation}
H = \begin{bmatrix}R & t \\ 0^\top & 1\end{bmatrix},
\label{eq:matrixH}
\end{equation}
where $R \in \mathrm{SO}(3)$ and $t \in \mathbb R^3$ are the rotational and translational parts of $H$ respectively. In the sequel, saying about elements of group $\SE(3)$ we always imply $4\times 4$ matrices of type~\eqref{eq:matrixH}.

\subsection{Epipolar Constraint}

Let $P' = [R' \ t']$ and $P'' = [R'' \ t'']$, where $R', R'' \in \mathrm{SO}(3)$ and $t', t'' \in \mathbb R^3$, be calibrated camera matrices. Let $q_i'$ and $q_i''$ be the corresponding images of a 3D point~$Q_i$. Then the epipolar constraint reads
\begin{equation}
\mathsf{f}_i := q_i''^\top (R [t']_\times - [t'']_\times R) q_i' = 0,
\label{eq:pepc}
\end{equation}
where $i$ counts the point pairs and $R = R''R'^\top$ is called the relative rotation matrix. We notice that Eq.~\eqref{eq:pepc} can be rewritten in form
\begin{equation}
q_i''^\top [t]_\times R q_i' = 0,
\label{eq:pepc2}
\end{equation}
where $t = Rt' - t''$ is called the relative translation. Matrix $E = [t]_\times R$ is well known in the computer vision community as an essential matrix.

\begin{SCfigure}[3]
    \centering
    \includegraphics[width=0.2\textwidth]{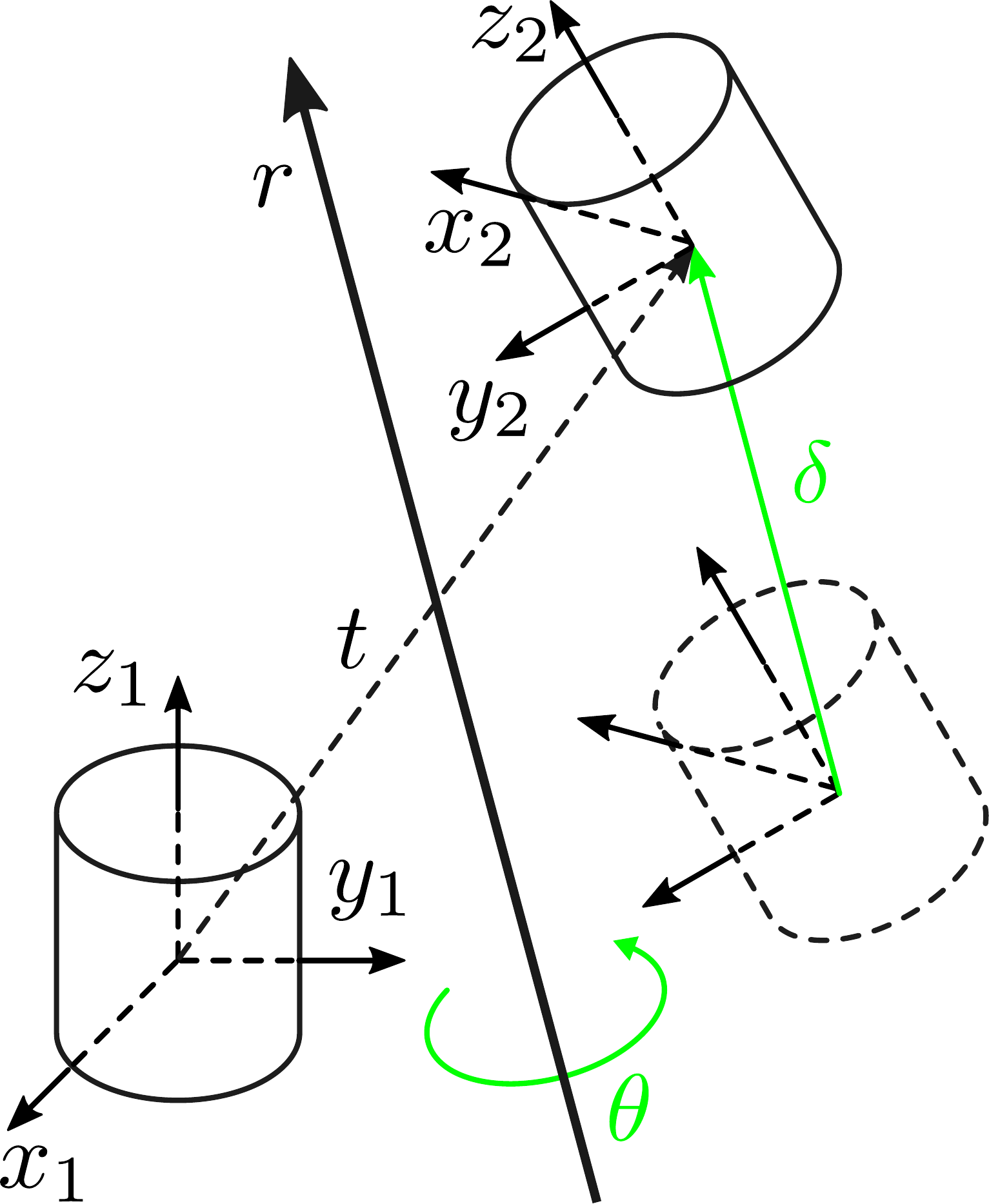}
    \caption{A rigid motion in $\SE(3)$ can always be decomposed as a rotation around an axis $r$ with angle $\theta$, and a screw translation $\delta$ along $r$. $\theta$ and $\delta$ remain consistent for different parts of the rigid body, regardless of the part offset and the local coordinate system.}
    \label{fig:se3}
\end{SCfigure}

\subsection{$\SE(3)$ invariants}

Given an element $H \in \SE(3)$, with its rotational part $R$ represented by~\eqref{eq:quat-to_rot}. Denote by $r = \frac{u}{\|u\|}$ the unit rotation axis of~$R$. Then the value
\begin{equation}
\delta = r^\top t
\label{eq:screw-def}
\end{equation}
is the \textit{screw translation} of $H$.
In this paper, we are specifically interested in the case of $\delta = 0$, which is also equivalent to
\begin{equation}
\mathsf{f}^0 := u^\top t = 0.
\label{eq:f0}
\end{equation}
Consider a robot with planar motion. Its rotation axis $r$ must be the normal vector of the motion plane. Its translation vector $t$ must lie on the motion plane. Thus it is obvious that the condition of zero screw translation ($\delta = 0$) holds for any planar motion regardless of the camera orientation with respect to the ground plane direction.

Figure~\ref{fig:se3} illustrates the definition of $\theta$ and $\delta$. We refer them as the $\SE(3)$ invariants, i.e. scalar values invariant under the conjugation by an $\SE(3)$ element. In robotics, this conjugation is known as the hand eye transformation. The difference between $\SE(3)$ invariants and an easily mixed-up concept bi-invariant metrics can be found from~\cite{Chirikjian15}.

\begin{theorem}[$\SE(3)$ Invariants]
For a transform $H \in \SE(3)$, its rotation angle $\theta$ and screw translation $\delta$ are invariant under the hand eye transformation $H' = X^{-1} H X$ with $X \in \SE(3)$.
\label{thm:invariants}
\end{theorem}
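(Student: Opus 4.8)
The plan is to put everything in homogeneous block form and reduce the statement to two elementary facts: the cyclic invariance of the trace, and the fact that the rotation axis of an orthogonal matrix $R$ is fixed by $R^\top$ as well. First I would write $X = \begin{bmatrix} R_X & t_X \\ 0^\top & 1\end{bmatrix}$, so that $X^{-1} = \begin{bmatrix} R_X^\top & -R_X^\top t_X \\ 0^\top & 1\end{bmatrix}$, and carry out the (routine) block multiplication to obtain
\begin{equation}
H' = X^{-1} H X = \begin{bmatrix} R_X^\top R R_X & R_X^\top\bigl((R - I)t_X + t\bigr) \\ 0^\top & 1 \end{bmatrix}.
\end{equation}
Thus the rotational part of $H'$ is $R' = R_X^\top R R_X$ and the translational part is $t' = R_X^\top\bigl((R-I)t_X + t\bigr)$.

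For the rotation angle, by Eq.~\eqref{eq:rotation-trace} the value $\theta\in[0,\pi]$ is determined by $\tr R = 2\cos\theta + 1$, so it suffices to note $\tr R' = \tr(R_X^\top R R_X) = \tr(R R_X R_X^\top) = \tr R$, using the cyclic property of the trace and $R_X R_X^\top = I$. (Equivalently, $R'$ is similar to $R$ and hence shares the eigenvalue triple $\{1, e^{\pm i\theta}\}$.)

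For the screw translation, I would first show that $r' = R_X^\top r$ is the unit rotation axis of $R'$: if $Rr = r$ then $R'(R_X^\top r) = R_X^\top R R_X R_X^\top r = R_X^\top r$, and $\|R_X^\top r\| = 1$; the orientation is exactly the one prescribed by the quaternion representation, since the unit quaternion of $R' = R_X^\top R R_X$ is $[\sigma\ (R_X^\top u)^\top]$. Substituting,
\begin{equation}
\delta' = r'^\top t' = (R_X^\top r)^\top R_X^\top\bigl((R-I)t_X + t\bigr) = r^\top (R - I) t_X + r^\top t ,
\end{equation}
and since $R$ is orthogonal with $Rr = r$ we also have $R^\top r = r$, whence $r^\top R = (R^\top r)^\top = r^\top$ and the first term vanishes. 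Therefore $\delta' = r^\top t = \delta$, completing the argument.

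The matrix manipulations here are entirely routine; the one point needing care — and the only genuine obstacle — is the bookkeeping of the orientation (sign) of the rotation axis, so that $\delta = r^\top t$ is well defined under conjugation. This is resolved by fixing the convention $\theta\in[0,\pi]$ and tracking the axis through the unit quaternion map $[\sigma\ u^\top]\mapsto[\sigma\ (R_X^\top u)^\top]$ rather than through $r$ alone. The degenerate case $u = 0$ (i.e. $\theta = 0$, $R = I$), where the axis is undefined, is handled separately: conjugation sends a pure translation to a pure translation and $\delta$ is taken to be $0$ in both, so the claim still holds.
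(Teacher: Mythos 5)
Your argument is correct and follows essentially the same route as the paper's own proof: block multiplication of $X^{-1}HX$ to get $R' = R_X^\top R R_X$ and $t' = R_X^\top(Rt_X - t_X + t)$, trace invariance for $\theta$, the axis relation $r' = R_X^\top r$, and the cancellation $r^\top R = r^\top$ in the substitution for $\delta'$. The only differences are that you justify the axis orientation via the quaternion map and treat the $R = I$ degeneracy explicitly, points the paper handles by citing a lemma of Tsai--Lenz and leaving implicit, so your write-up is if anything slightly more careful.
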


\begin{proof}
Let the rotational and translational parts of $H$, $H'$, $X$ be $R$, $R'$, $R_X$ and $t$, $t'$, $t_X$ respectively. Then we have
\begin{multline}
H' = \begin{bmatrix}R_X^\top & -R_X^\top t_X \\ 0^\top & 1\end{bmatrix} \begin{bmatrix}R & t \\ 0^\top & 1\end{bmatrix} \begin{bmatrix}R_X & t_X \\ 0^\top & 1\end{bmatrix} = \begin{bmatrix}R_X^\top R R_X & R_X^\top (R t_X - t_X + t) \\ 0^\top & 1\end{bmatrix},
\end{multline}
that is $R' = R_X^\top R R_X$ and $t' = R_X^\top (R t_X - t_X + t)$. The invariance of $\theta$ follows from Eq.~\eqref{eq:rotation-trace}, since $\tr(R') = \tr(R_X^\top R R_X) = \tr(R)$.

Further, let $r$ and $r'$ be the unit rotation axes of $R$ and $R'$ respectively. It is clear that $r = R r$ and $r' = R' r' = R_X^\top R R_X r'$ (Lemma II \cite{tsai1988real}). Hence, axes $r$ and $r'$ are related by $r' = R_X^\top r$. Substituting this into the definition of $\delta'$ yields
\begin{multline}
\delta' = r'^\top t' = r^\top R_X R_X^\top (R t_X - t_X + t) = r^\top R t_X - r^\top t_X + r^\top t = r^\top t = \delta.
\end{multline}
% This completes the proof.
\end{proof}
Theorem~\ref{thm:invariants} is a well-known result in robotics~\cite{Chen91,Chirikjian15}. Different proof for the rotation part can also be found from~\cite{shiu1989calibration,Chen91,Chirikjian15}.
% Consider the scenarios mention in Section~\ref{sec:intro}:
% \begin{enumerate}
%     \item For a system with a camera and an angle sensor, Theorem~\ref{thm:invariants} implies that rotation angle measured by the angle sensor is always the rotation angle of the camera.
%     \item For a system of planar motion, Theorem~\ref{thm:invariants} implies that its equipped camera also moves on a plane and have zero screw translation.
% \end{enumerate}

\section{Minimal Problem Formulations}

The relative pose estimation problem aims to solve for the relative rotation $R$ and relative translation $t$ given several image point pairs. It is well known that if no additional constraints are used, the relative pose can be estimated minimally from $5$ point pairs \cite{Nister,SEN06,Hartley2012}. With known $\SE(3)$ invariants (rotation angle $\theta$ and screw translation $\delta$), the number of point pairs required for a minimal solution is reduced.
Table \ref{tab:problem-summary} summarizes the minimal relative pose estimation problems that can be formulated for different combinations of image data and $\SE(3)$ invariants. 
%In this paper, we consider the following minimal relative pose estimation problems:
%\begin{itemize}
%\item 5P: given 5 point pairs;
%\item 4P-RA: given 4 point pairs and angle $\theta$;
%\item 4P-ST0: given 4 point pairs and constraint $\delta = 0$;
%\item 3P-RA-ST0: given 3 point pairs, angle $\theta$ and constraint $\delta = 0$;
%\item 5PST1: given 5 point pairs and constraint $\delta \neq 0$;
%\item 4P-RA-ST1: given 4 point pairs, angle $\theta$ and constraint $\delta \neq 0$.
%\end{itemize}

\begin{SCtable}[]
\centering\footnotesize
\begin{tabular}{llllll}
\hline
Problem & $\SE(3)$ Inv& $n$ & DoF & Variables & Constraints \\ \hline
5P \cite{Hartley2012,Nister,SEN06} & - & 5 & 5 & $\sigma, u, t$ & $\mathsf{f}_1, \mathsf{f}_2, \mathsf{f}_3, \mathsf{f}_4, \mathsf{f}_5$\\
4P-RA \cite{LHLP,MartLi19} & $\theta$ & 4 & 5 & $u, t$ & $\mathsf{f}_1, \mathsf{f}_2, \mathsf{f}_3, \mathsf{f}_4, \mathsf{f}^\sigma$ \\
4P-ST0 & $\delta = 0$ & 4 & 5 & $\sigma, u, t$ & $\mathsf{f}_1, \mathsf{f}_2, \mathsf{f}_3, \mathsf{f}_4, \mathsf{f}^0$ \\
3P-RA-ST0 & $\theta, \delta = 0$ & 3 & 5 & $u, t$ & $\mathsf{f}_1, \mathsf{f}_2, \mathsf{f}_3, \mathsf{f}^\sigma, \mathsf{f}^0$ \\
5P-ST1 & $\delta \neq 0$ & 5 & 6 & $\sigma, u, t$ & $\mathsf{f}_1, \mathsf{f}_2, \mathsf{f}_3, \mathsf{f}_4, \mathsf{f}_5$ \\
4P-RA-ST1 & $\theta, \delta \neq 0$ & 4 & 6 & $u, t$ & $\mathsf{f}_1, \mathsf{f}_2, \mathsf{f}_3, \mathsf{f}_4, \mathsf{f}^\sigma$\\ \hline
\end{tabular}
    \caption[]{Relative pose problems with $\SE(3)$ invariants.\\
{\scriptsize RA:~Relative angle ($\theta$)\\
    ST0:~Zero screw translation ($\delta = 0$)\\
    ST1:~Non-zero screw translation ($\delta \neq 0$)\\
$n$:~Number of points for minimal cases}
    }
\label{tab:problem-summary}
\end{SCtable}

\begin{remark}
    In 5P, 4P-RA, 4P-ST0 and 3P-RA-ST0, each of $\SE(3)$ invariants $\mathsf{f}^\sigma$ and $\mathsf{f}^0$ can replace one of point correspondences $\mathsf{f}_i$ as constraints and the relative pose can be estimated up to an ambiguous scale (5 DoF). In 5P-ST1 and 4P-RA-ST1, condition~\eqref{eq:screw-def} with $\delta \neq 0$ can not be used to replace point correspondences as the essential matrix does not change regardless of the value of a non-zero $\delta$. Instead $\delta$ can be used to determine the length of translation and hence the overall scale is observable (6 DoF). 5P-ST1 and 4P-RA-ST1 are therefore equivalent to problems 5P and 4P-RA respectively.
\end{remark}

%\subsection*{Beyond Pinhole Cameras}

\begin{remark}[Beyond Pinhole Cameras]
In this paper we only focus on relative pose estimation for a pinhole camera. However, it is worth mentioning the difference between pinhole camera and generalized camera models under known $\SE(3)$ invariants. For a generalized camera model~\cite{Pless,SNOA}, the relative translation length is observable. The vanilla version of relative pose estimation problem requires $6$ points to fully recover the 6 DoF relative pose. When the screw translation is known, regardless of being zero or non-zero, 5 points are required to fully recover the 6 DoF relative pose. The case of known relative rotation angle for generalized cameras was covered in~\cite{MartLi19}.
\end{remark}

\section{Solution Formulations for Relative Pose Estimation}
\label{sec:formulations}

Various solutions have been studied in the past decades on relative estimation problems, i.e. to solve~\eqref{eq:pepc} or~\eqref{eq:pepc2} under different constraints. We provide a comprehensive summary for these formulations in this section and also discover how $\SE(3)$ invariants can be denoted for each formulation. All mentioned previous formulations are also listed in Table~\ref{tab:previous}.
% We briefly recall some existing formulations of relative pose problems in terms of polynomial equations.
%In our study, we found that there is no a universal formulation that would be optimal for all relative pose problems.

\begin{table}
\begin{minipage}[t]{0.55\textwidth}
    \centering\footnotesize
    \begin{threeparttable}
	\begin{tabular}[t]{lllrrr}
	    \hline
            Problem & Form & \#$R$ & \#$t$ & Templ & \#S\\\hline
            PC 5P~\cite{Kneip12} & SIR3 & 9 ($R$) & 3 & $66\PLH197$ & 20\\
            PC 5P~\cite{Fathian18} & SIR6 & 4 ($u, \sigma$) & 3 & $40\PLH56$ & 35\\
            PC 5P~\cite{kalantari2009five} & Direct & 4 ($u, \sigma$) & 3 & NR & 80\\
            PC$+\theta$ (4P-RA)~\cite{LHLP} & SIR3 & 3 ($u$) & 3 & $270\PLH290$ & 20\\
            PC$+\theta$ (4P-RA)~\cite{MartLi19} & SIR2 & 3 ($u$) & 2 & $16\PLH36$ & 20\\
            PC$+$vert (IMU)~\cite{Kalantari-vert} & SIR3 & 1 (yaw) & 3 & CF & 4\\
            PC$+$vert (planar)~\cite{Choi18} & Direct & 1 (yaw) & 1 & CF & 4\\
            PC$+$Ackermann~\cite{SF} & Direct & 1 (yaw) & 0 & CF & 1\\
            GC 6P~\cite{SEN06} & SIR2 & 3 ($v$) & 2 & $60\PLH120$\tnote{1} & 64\\
            GC$+\theta$~\cite{MartLi19} & SIR2 & 3 ($u$) & 2 & $37\PLH81$ & 44\\
            GC$+$vert (IMU)~\cite{Lee-vert} & SIR3 & 1 (yaw) & 3 & CF & 8\\
            GC$+$vert (planar)~\cite{lee2013structureless} & SIR3 & 1 (yaw) & 2 & CF & 6\\
            GC$+$Ackermann~\cite{hee2013motion} & Direct & 1 (yaw) & 1 & CF & 2\\
            \hline
        \end{tabular}
    \end{threeparttable}
\end{minipage}
\hspace{11pt}
\begin{minipage}[t]{0.38\textwidth}
    \centering\footnotesize
    \begin{threeparttable}
	\begin{tabular}[t]{llrr}
	    \hline
            Problem & Form & Templ & \#S\\\hline
            PC 5P~\cite{Nister,SEN06} & NullE & $10\PLH20$ & 10\\
            PC$-$focal~\cite{KBP08,SNKS} & NullE & $31\PLH46$ & 15\\
            PC$+\theta-$focal$-$pp~\cite{Mart18} & NullE & $19\PLH32$\tnote{2} & 6\\
            PC$+$vert (IMU)~\cite{FTP10} & NullE & $6\PLH10$ & 4\\
            \hline
        \end{tabular}
        \begin{tablenotes}
        \vspace{3pt}
            \scriptsize
        \item[*] \#$R$, \#$t$: Number of parameters
        \item[*] \#S: Number of solutions
        \item[*] $+$/$-$: With a constraint or an unknown
        \item[*] PC/GC: Pinhole camera / generalized camera
        \item[*] NR: Not reported
        \item[*] CF: Closed form solution w/o template matrix
        \item[*] vert: Vertical direction
        \item[*] Ackermann: Ackermann motion model
        \item[*] focal/pp: Focal length / principle point
        \item[1] Insufficient for full Gr\"obner basis generation
        \item[2] The largest of cascaded templates reported
    \end{tablenotes}
    \end{threeparttable}
\end{minipage}
	\vspace{6pt}
        \caption[]{Representitive works of polynomial formulations for relative pose estimation.}
        \label{tab:previous}
\end{table}

\subsection{Solutions by Decomposing $E$}
\subsubsection{Directly Solving $R$ and $t$.}

As an intuitive start, it is possible to directly solve~\eqref{eq:pepc},~\eqref{eq:pepc2} by considering $R$ and $t$ as polynomial unknowns. In \cite{LHLP}, $R$ is parameterized by angle-axis to integrate rotation angle and $t$ is constrained by $\|t\|^2 = 1$ to remove scale ambiguity. In theory $R$ can be also parameterized by a quaternion with constraint of $\mathsf{f}^\sigma$ or by $9$ matrix elements with constraint of $R^\top R = I$. However these formulations involve $6\sim12$ unknowns (including 3 for translation) and require quite complicated polynomial elimination, which makes high computational burden in real-time applications. A simpler specialization is when the vertical direction is known from IMU \cite{Kalantari-vert} or known ground plane \cite{Choi18}, $R$ can be parameterized by a yaw angle rotation. With Ackermann motion assumed, the parameters can even be further reduced~\cite{hee2013motion,SF}.

Since each $\mathsf{f}_i$ is linear in $t$, cf. Eq.~\eqref{eq:pepc2}, several formulations have been proposed in previous works to eliminate translation variables and solve rotation parameters only for simplicity.

\subsubsection{SIR3: Solving Isolated Rotation by a $3\PLH3$ Determinant.}

We can isolate unknown translation by rewriting the epipolar constraints~\eqref{eq:pepc2} for $n$ point pairs in the form
\begin{equation}
G \ t = 0,
\label{eq:Gt}
\end{equation}
where $G$ is a matrix of size $n \times 3$. Elements of the $i$-th row of matrix $G$ are polynomials in unknown rotation parameters and known $q_i', q_i''$. It follows from Eq.~\eqref{eq:Gt} that all $3\times 3$ minors of $G$ must vanish for a valid translation. Thus we obtain new polynomial constraints of the total degree~$6$ on the rotation parameters only. This solution formulation will be further referred to as \textit{SIR3}. If the rotation matrix is represented by~\eqref{eq:quat-to_rot} (resp. by~\eqref{eq:cayley}), then the formulation is denoted by SIR3+$u$ (resp. SIR3+$v$). When rotation is constrained to have only one unknown, SIR3 generates a closed form univariate polynomial \cite{Kalantari-vert,Lee-vert}. However, with more unknowns in rotation, it is still not satisfactory as leads to large matrix templates for the Gr\"obner basis computation. For example, in~\cite{Kneip12} SIR3 is used to solve 5P by a reduction on a $66\times 197$ matrix. In~\cite{LHLP}, the SIR3+$u$ formulation of 4P-RA involves even larger template matrix and has to be solved by numerical search.

The known rotation angle can be easily integrated into SIR3+$u$. If the motion is planar, i.e. the screw translation is zero, polynomial $\mathsf{f}^0$ from~\eqref{eq:f0} can be written as a new row $u^\top$ of matrix~$G$. 

%In Table~\ref{tab:template-size}, we show that AG produces a $290 \times 330$ elimination template for SIR3 form.

%In the derivation of~\cite{SNOA}, the unknown rotation $[\sigma\ u^\top]$ in SIR3 is replaced by a scaled version $[1\ v^\top]$ to reduce one more unknown while not changing the zero value of $\det F_i^{jk}$. For example, as shown in Table~\ref{tab:template-size}, the elimination template size is further reduced for 5P and 4P-ST0 whose $\sigma$ is unknown. For 4P-RA and 3P-RA-ST0 the two versions are obviously equivalent.

\subsubsection{SIR6: Solving Isolated Rotation by a $6\PLH6$ Determinant.}
Let $Q_i$ be the $i$-th 3D point so that 
\begin{equation}
\lambda_i q_i' = P' Q_i,\quad \mu_i q_i'' = P'' Q_i,
    \label{eq:a-point-pair}
\end{equation}
where $\lambda_i$ and $\mu_i$ are some scalars. The relative pose $R$ and $t$ satisfies $\lambda_i q_i'' = \mu_i R q_i' + t$. Consider this equation for the $i, j, k$-th points and subtract the $i$-th equation over the $j$-th and $k$-th respectively to eliminate $t$. We can obtain two 3D linear equations, forming a $6\times6$ matrix $M_{ijk}$.
\begin{equation}
    M_{ijk} [\lambda_i\ \mu_i\ \lambda_j\ \mu_j\ \lambda_k\ \mu_k]^\top = 0.
\end{equation}
Similar to SIR3, the determinant of $M_{ijk}$ must vanish, resulting in polynomials on the rotation parameters only. SIR6 is proposed by \cite{Fathian18} as an alternative solution to 5P. Symbolic computation reveals the relationship between SIR6 and SIR3:
\begin{theorem}
    Consider a $3 \times 3$ submatrix $G_{ijk}$ of matrix $G$ whose three rows correspond to the $i$-th, $j$-th, and $k$-th point pairs. We have $\det G_{ijk} = \det M_{ijk}$, up to a sign.
    \label{thm:t6d}
\end{theorem}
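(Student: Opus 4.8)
The plan is to expand both determinants into the same bilinear combination of scalar triple products of the data vectors and read off the identity. Throughout write $a_m := Rq_m'$, $b_m := q_m''$, and $[x,y,z] := x^\top(y\times z) = \det\!\begin{bmatrix} x & y & z\end{bmatrix}$ for the scalar triple product.

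First I would make the two matrices explicit. By the scalar triple–product identity, $q_i''^\top[t]_\times R q_i' = (a_i\times b_i)^\top t$, so (with a harmless choice of overall sign per row) $G_{ijk}$ is the $3\times 3$ matrix with rows $(a_i\times b_i)^\top$, $(a_j\times b_j)^\top$, $(a_k\times b_k)^\top$, whence
\[
\det G_{ijk} = (a_i\times b_i)^\top\big((a_j\times b_j)\times(a_k\times b_k)\big).
\]
For $M_{ijk}$, stacking the two vector equations used in its construction and then subtracting one block row of three from the other — a determinant–preserving operation — brings $M_{ijk}$ to the block form
\[
\begin{bmatrix} U_i & -U_j & 0\\ 0 & U_j & -U_k\end{bmatrix},
\qquad U_m := \begin{bmatrix} b_m & -a_m\end{bmatrix}\in\mathbb R^{3\times 2}.
\]

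Next I would expand $\det M_{ijk}$ by the Laplace (complementary–minor) expansion along its first three rows. Since the top block row is zero on columns $5,6$ and the bottom block row is zero on columns $1,2$, the only column triples $S$ contributing a nonzero product of complementary minors are $S=\{1,2,3\}$ and $S=\{1,2,4\}$; evaluating those two $3\times 3$ minors and their complements writes $\det M_{ijk}$ as a difference of two products of the form $[\cdot,\cdot,\cdot]\,[\cdot,\cdot,\cdot]$ in the vectors $a_m,b_m$. In parallel I would expand $\det G_{ijk}$ using the quadruple–product identity $(a\times b)\times(c\times d)=[a,b,d]\,c-[a,b,c]\,d$: this gives $(a_j\times b_j)\times(a_k\times b_k)=[a_j,b_j,b_k]\,a_k-[a_j,b_j,a_k]\,b_k$, and dotting with $a_i\times b_i$ yields
\[
\det G_{ijk} = [a_i,b_i,a_k]\,[a_j,b_j,b_k] - [a_i,b_i,b_k]\,[a_j,b_j,a_k].
\]
Matching this with the expansion of $\det M_{ijk}$ (likewise a difference of two products of triple products) term by term gives $\det G_{ijk}=\pm\det M_{ijk}$.

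The only real work is the sign bookkeeping: the alternating signs of the Laplace expansion, the antisymmetry and cyclic invariance of $[\cdot,\cdot,\cdot]$, and the sign conventions buried in the definitions of $G$ (the sign picked up when isolating $t$ in~\eqref{eq:pepc2}) and of $M_{ijk}$ (which pose equation is subtracted from which, and the ordering of the blocks). I expect this to be the main nuisance rather than a genuine difficulty; since the statement only claims equality up to sign, all of it can be absorbed, which is presumably why the authors simply verify the identity by symbolic computation.
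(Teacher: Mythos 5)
Your proposal is correct, and it supplies what the paper does not: the paper states Theorem~2 purely as the outcome of a symbolic computation, with no analytic argument. Your route --- writing the rows of $G_{ijk}$ as $(a_m\times b_m)^\top$ with $a_m=Rq_m'$, $b_m=q_m''$, reducing $M_{ijk}$ by determinant-preserving block row operations, and comparing a Laplace expansion of the $6\times 6$ determinant along its first three rows with a quadruple-product expansion of the $3\times 3$ determinant --- additionally reveals that the identity is a polynomial identity in the six vectors $a_m,b_m$ alone, independent of $R$ being a rotation or of any epipolar structure, which is more informative than a computer verification; the price is exactly the sign bookkeeping you flag, and since the claim is only up to sign this is harmless.

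One detail to repair before your ``term by term'' match literally goes through: with the block form you wrote, $\begin{bmatrix}U_i & -U_j & 0\\ 0 & U_j & -U_k\end{bmatrix}$, the shared block is the $j$-th one, so the two surviving Laplace terms pair $(a_i,b_i)$ with one of $a_j,b_j$ and the remaining $j$-vector with $(a_k,b_k)$; up to signs they read $[a_i,b_i,b_j]\,[a_j,a_k,b_k]-[a_i,b_i,a_j]\,[b_j,a_k,b_k]$. The expansion of $\det G_{ijk}$ you chose, $[a_i,b_i,a_k]\,[a_j,b_j,b_k]-[a_i,b_i,b_k]\,[a_j,b_j,a_k]$, instead pairs the $i$- and $j$-rows with the $k$-vectors, so the summands do not coincide one by one (the two expressions are equal, but only via a further Pl\"ucker-type identity). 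Either use the other form of the quadruple-product identity, $(a\times b)\times(c\times d)=[a,c,d]\,b-[b,c,d]\,a$, which produces exactly the $i$--$j$ pairing above, or reduce $M_{ijk}$ to the variant whose rows are the $(i{-}k)$ and $(j{-}k)$ differences so that the $k$-block is the shared one; with either choice the two expansions coincide term by term up to an overall sign, completing the proof.
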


\subsubsection{SIR2: Solving Isolated Rotation by a $2\PLH2$ Determinant.}
\label{sec:t2d}

Using the rigid motion ambiguity of the world coordinate frame, we set $Q_i =[0 \ 0 \ 0 \ 1]^\top$ in~\eqref{eq:a-point-pair} for a certain~$i$. This yields
\begin{equation}
t' = \lambda_i q_i',\ t'' = \mu_i q_i''.
\end{equation}
Substituting $t'$ and $t''$ into Eq.~\eqref{eq:pepc} for a $j$-th pair with $j \neq i$, we convert Eq.~\eqref{eq:pepc} into $F_{ij} \begin{bmatrix} \lambda_i & \mu_i \end{bmatrix}^\top = 0$. Construct $F_{ik}$ from a $k$-th point pair and stack with $F_{ij}$ as $F_{ijk}$, we have
% \begin{equation}
% \begin{bmatrix} q_j''^\top R p_{ij}' & p_{ij}''^\top R q_j' \end{bmatrix}
%     \begin{bmatrix} \lambda_i & \mu_i \end{bmatrix}^\top = 0,
% \label{eq:simpler-equation}
% \end{equation}
\begin{equation}
    F_{ijk} \begin{bmatrix} \lambda_i & \mu_i \end{bmatrix}^\top = 0.
\label{eq:generalized-matrix}
\end{equation}
% where $p_{ij}' = q_i' \times q_j'$ and similarly for $p_{ij}''$. Thus, given $n$ point pairs, we obtain $n - 1$ equations of type~\eqref{eq:simpler-equation}. Let these equations be stacked as
where matrix $F_{ijk}$ is of size $2\times2$. $F_{ijk}$ must have zero determinant. This leads to degree $4$ polynomial equations in the rotation parameters. The proposed solution formulation will be further referred to as \textit{SIR2}. The two versions of SIR2 corresponding to the quaternion~\eqref{eq:quat-to_rot} and Cayley~\eqref{eq:cayley} parametrizations are denoted by SIR2+$u$ and SIR2+$v$ respectively. The SIR2+$v$ form was earlier used in~\cite{SNOA} for solving the relative pose problem for generalized cameras. In~\cite{MartLi19}, SIR2+$u$ was used to solve 4P-RA.
%Taking 5P as an example, AG produces a $90\times130$ elimination template for SIR2 as shown in Table~\ref{tab:template-size}.
Symbolic computation also reveals the connection between SIR2 and SIR3 and explains why SIR2 is a simpler formulation compared to SIR3/SIR6.
\begin{theorem}
    Consider $G_{ijk}$ as defined in Theorem~\ref{thm:t6d}. Under SIR2/SIR3+$u$, we have:
    \begin{equation}
    \det G_{ijk} = (\|u\|^2 + \sigma^2) \cdot \det F_{ijk},
    \label{eq:factorization}
    \end{equation}
    up to a sign. This equation also holds if replacing $(\|u\|^2 + \sigma^2)$ with $(\|v\|^2 + 1)$ under the SIR2/SIR3+$v$.
\end{theorem}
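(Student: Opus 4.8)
The plan is to pass from the polynomial parametrisation to a genuine element of $\mathrm{SO}(3)$, which reduces the claim to a bracket identity that can be settled by a single Grassmann--Pl\"ucker relation.

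\textbf{Reduction to a true rotation.} Let $\bar R := 2(uu^\top-\sigma[u]_\times)+(\sigma^2-\|u\|^2)I$ denote the right-hand side of \eqref{eq:quat-to_rot} read as a polynomial matrix in $u,\sigma$, i.e.\ \emph{without} imposing \eqref{eq:unit-quaternion}. A direct computation gives $\bar R^\top\bar R=(\|u\|^2+\sigma^2)^2 I$ and $\det\bar R=(\|u\|^2+\sigma^2)^3$, so that $\bar R=(\|u\|^2+\sigma^2)\,R$ with a genuine $R\in\mathrm{SO}(3)$ whenever $\|u\|^2+\sigma^2\neq0$. I would then count the homogeneity of $\det G_{ijk}$ and $\det F_{ijk}$ in the entries of the rotation matrix: each row of $G_{ijk}$ is $(R q_a'\times q_a'')^\top$, and each entry of $F_{ijk}$ is $q_j''^\top R[q_i']_\times q_j'$ or $-q_j''^\top[q_i'']_\times R q_j'$ (it is here that the construction of SIR2 from \eqref{eq:pepc}, rather than \eqref{eq:pepc2}, is used), all linear in the entries of the rotation matrix; hence $\det G_{ijk}$ is homogeneous of degree $3$ and $\det F_{ijk}$ of degree $2$ in them. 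Substituting $\bar R=(\|u\|^2+\sigma^2)R$ therefore extracts $(\|u\|^2+\sigma^2)^3$ from $\det G_{ijk}$ and $(\|u\|^2+\sigma^2)^2$ from $\det F_{ijk}$, so it is enough to prove $\det G_{ijk}=\pm\det F_{ijk}$ for $R\in\mathrm{SO}(3)$; the stated identity then follows because both sides are polynomials agreeing on the dense set $\{\|u\|^2+\sigma^2\neq0\}$. The Cayley claim is the specialisation $\sigma=1$, $u=v$ of the quaternion claim (equivalently, the same argument run with $\hat R:=(1-\|v\|^2)I-2[v]_\times+2vv^\top=(1+\|v\|^2)R$), since $\hat R(v)=\bar R(u,\sigma)/\sigma^2$ and $1+\|v\|^2=(\|u\|^2+\sigma^2)/\sigma^2$.

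\textbf{The rotation case.} Assume $R\in\mathrm{SO}(3)$, put $p_a:=Rq_a'$, $r_a:=q_a''$, and write $[abc]:=\det[\,a\ b\ c\,]$, antisymmetric in its three arguments. Using $R(a\times b)=Ra\times Rb$, the rows of $G_{ijk}$ become $(p_a\times r_a)^\top$; expanding the $3\times3$ determinant via $(p_j\times r_j)\times(p_k\times r_k)=[p_jr_jr_k]\,p_k-[p_jr_jp_k]\,r_k$ gives
\begin{equation*}
\det G_{ijk}=[p_ir_ip_k]\,[p_jr_jr_k]-[p_ir_ir_k]\,[p_jr_jp_k].
\end{equation*}
The same rule turns the rows of $F_{ijk}$ into $F_{ij}=\bigl([p_ip_jr_j],\,-[r_ip_jr_j]\bigr)$ and $F_{ik}=\bigl([p_ip_kr_k],\,-[r_ip_kr_k]\bigr)$, so that, after cyclic reordering of brackets,
\begin{equation*}
\det F_{ijk}=[p_jr_jr_i]\,[p_kr_kp_i]-[p_jr_jp_i]\,[p_kr_kr_i].
\end{equation*}
These two expressions are not equal monomial by monomial; they are linked by the Grassmann--Pl\"ucker relation for the six columns $p_i,r_i,p_j,r_j,p_k,r_k$ with distinguished pair $\{p_j,r_j\}$,
\begin{equation*}
[p_jr_jp_i]\,[r_ip_kr_k]-[p_jr_jr_i]\,[p_ip_kr_k]+[p_jr_jp_k]\,[p_ir_ir_k]-[p_jr_jr_k]\,[p_ir_ip_k]=0,
\end{equation*}
which is the first-row Laplace expansion of a $4\times4$ determinant whose top row is the image of the other three rows under the covector $p_j\times r_j$, hence vanishes. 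Moving the last two terms to the other side and using $[r_ip_kr_k]=[p_kr_kr_i]$, $[p_ip_kr_k]=[p_kr_kp_i]$ turns the left-hand side into $-\det F_{ijk}$ and the right-hand side into $\det G_{ijk}$, giving $\det G_{ijk}=-\det F_{ijk}$, i.e.\ the claim up to a sign.

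\textbf{Anticipated difficulty.} The homogeneity reduction is mechanical once the rescaling identities for $\bar R$ (and $\hat R$) are written down; the genuinely substantive step is the rotation case, where the naive term-by-term comparison of $\det G_{ijk}$ and $\det F_{ijk}$ fails and one must recognise the three-term Pl\"ucker relation on the six image directions as the right bridge, while tracking the cyclic bracket reorderings so that all signs collapse to the single ``up to a sign''. A heavier alternative would factor the $6\times6$ matrix $M_{ijk}$ of Theorem~\ref{thm:t6d} into a product that exhibits the scalar factor directly, but the Pl\"ucker route appears cleanest.
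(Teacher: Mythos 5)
Your proposal is correct, and it goes by a genuinely different route than the paper, which states this theorem without an analytic derivation and justifies it (as with Theorem~2) by symbolic computation. Your argument is a structural one: first the observation that the unnormalised quaternion formula gives $\bar R=(\|u\|^2+\sigma^2)R$ with $R\in\mathrm{SO}(3)$, so that by linearity of the rows of $G_{ijk}$ and of the entries of $F_{ij},F_{ik}$ in the rotation matrix the claim reduces, after extracting $N^3$ versus $N^2$ with $N=\|u\|^2+\sigma^2$, to showing $\det G_{ijk}=\pm\det F_{ijk}$ for genuine rotations; and then the bracket computation, where $\det G_{ijk}=[p_ir_ip_k][p_jr_jr_k]-[p_ir_ir_k][p_jr_jp_k]$ and $\det F_{ijk}=[p_jr_jr_i][p_kr_kp_i]-[p_jr_jp_i][p_kr_kr_i]$ are tied together by the Grassmann--Pl\"ucker relation with distinguished pair $\{p_j,r_j\}$, proved via the rank argument on the $4\times4$ matrix. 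I checked the triple-product expansions, the cyclic reorderings, and the sign bookkeeping: they give the uniform identity $\det G_{ijk}=-\det F_{ijk}$ on rotations, which is exactly what the density argument needs, and the Cayley case does follow by the specialisation $\sigma=1$, $u=v$ (equivalently via $\hat R(v)=(1+\|v\|^2)R$), since the quaternion identity is a polynomial identity in $(u,\sigma)$. Compared with the paper's computer-algebra verification, your proof buys interpretability and generality: it explains where the factor $\|u\|^2+\sigma^2$ (resp. $\|v\|^2+1$) comes from, why the 5P problem in SIR3+$v$ acquires the spurious component $\|v\|^2+1=0$ noted in the subsequent remark, and it would extend verbatim to any choice of indices or any parametrisation whose cleared-denominator rotation matrix is a scalar multiple of a true rotation; the paper's symbolic check, on the other hand, is quicker to produce and immune to convention slips. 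The only point worth making explicit if this were written up is that the SIR2 entries come from Eq.~\eqref{eq:pepc} with $t'=\lambda_iq_i'$, $t''=\mu_iq_i''$, which you do flag, and that row-ordering conventions only affect the overall sign already absorbed by the statement.
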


\begin{remark}
According to Eq.~\eqref{eq:factorization}, since equation $\|v\|^2 + 1 = 0$ has infinite number of complex solutions, 5P in SIR3+$v$ is not zero-dimensional over~$\mathbb C$.
\end{remark}

The known rotation angle can be easily integrated into SIR2+$u$. If the motion is planar, i.e. $\delta = 0$, then polynomial $\mathsf{f}^0$ can add a row $[- u^\top q_i' \ u^\top q_i'']$ to $F_{ijk}$.

% Now we would like to establish a connection between the SIR2 and SIR3 formulations. Consider a $2 \times 2$ submatrix $F_i^{jk}$ of matrix $F_i$ whose two rows correspond to the $j$-th and $k$-th point pairs. It was proved in~\cite{MartLi19} that $\det F_i^{jk}$ does not change (up to a sign) when permuting indices $i$, $j$, and~$k$. Consider also a $3 \times 3$ submatrix $G_{ijk}$ of matrix $G$ whose three rows correspond to the $i$-th, $j$-th, and $k$-th point pairs. Then, for the quaternion representation~\eqref{eq:quat-to_rot} of matrix $R$, symbolic computation yields
% Similar conclusion is also discovered by~\cite{Fathian18}, who obtains $\det G_{ijk}$ in a different formulation. This equation also holds if replacing $(\|u\|^2 + \sigma^2)$ with $(\|v\|^2 + 1)$ under the SIR3+$v$ Caylay representation~\eqref{eq:cayley}.

%We would like to mention that in~\cite{Fathian18} another formulation of the 6-degree polynomial determinant is proposed, and this determinant is also factorizable by $\|u\|^2 + \sigma^2$. The formulation can be verified to be equivalent to $\det G_{ijk}$.

\subsection{Solutions by Constraining $E$}
\subsubsection{NullE: Solving Essential Matrix Represented by NullSpace Bases.}

Instead of direct solving for $R$ and $t$, a more classical approach to the relative pose problem is solving first for the essential matrix $E$ which is a mixed form of rotation and translation parameters. Unknown $E$ is parameterized by $\sum_{i = 1}^{9 - n} \gamma_i E^{(i)}$, where matrices $E^{(i)}$ form the nullspace basis of the underdetermined linear system $\{\mathsf{f}_i \mid i = 1, \dots, n \}$, $\gamma_i$ are new unknowns which are usually scaled so that $\gamma_1 = 1$. Traditional 5P solvers \cite{Nister,SEN06,Hartley2012} use the following constraints to form a polynomial system on $\gamma_i$:
\begin{align}
\label{eq:det-e}
\det E &= 0,\\
\label{eq:eee}
2EE^\top E - \tr(EE^\top) E &= 0.
\end{align}
In addition, \cite{FTP10} found that known vertical can be denoted as constraints on $E$.
Known $\SE(3)$ invariants also can be formulated as constraints on~$E$ as follows.

\begin{theorem}[\cite{Mart18}]
Let $E = [t]_\times R$ be an essential matrix and $\tr R = \tau$. Then $E$ fulfills the following equation
\begin{equation}
\frac{1}{2}\,(\tau^2 - 1)\tr(EE^\top) + (\tau + 1)\tr(E^2) - \tau\tr^2 E = 0.
\label{eq:e-tau}
\end{equation}
\end{theorem}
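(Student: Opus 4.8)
The plan is to verify the identity~\eqref{eq:e-tau} by direct computation, exploiting the substitution $E = [t]_\times R$ together with standard $3\times 3$ matrix identities for skew-symmetric matrices and rotations. First I would recall that for any $3$-vector $t$ one has $[t]_\times [t]_\times = t t^\top - \|t\|^2 I$ and $[t]_\times^\top = -[t]_\times$, so that $E E^\top = [t]_\times R R^\top [t]_\times^\top = -[t]_\times^2 = \|t\|^2 I - t t^\top$; hence $\tr(E E^\top) = 2\|t\|^2$. This already disposes of the first term. For the third term, $\tr E = \tr([t]_\times R)$; writing $R$ via the quaternion form~\eqref{eq:quat-to_rot} or simply using $\tr([t]_\times R) = t \cdot \omega$ where $\omega$ is (up to scale) the axis-related vector, one finds a compact expression, but it is cleaner to keep $\tr E$ symbolic and relate it to $\tau = \tr R$ at the end.

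The main work is the middle term $\tr(E^2) = \tr([t]_\times R [t]_\times R)$. Here I would use the decomposition of $R$ along and orthogonal to its rotation axis: if $r$ is the unit axis and $\theta$ the angle, then $R = r r^\top + \cos\theta\,(I - r r^\top) + \sin\theta\,[r]_\times$, so $\tau = 1 + 2\cos\theta$. Substituting this into $[t]_\times R [t]_\times R$ and taking the trace, every resulting scalar can be written in terms of $\|t\|^2$, $\rho := r^\top t$ (the screw translation), $\cos\theta$ and $\sin\theta$, using identities like $\tr([t]_\times [r]_\times) = -2\,(r^\top t)$-type contractions and $[t]_\times [r]_\times = r t^\top - (r^\top t) I$. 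After collecting terms I expect $\tr(E^2)$ to come out as an affine combination of $\|t\|^2$ and $\rho^2$ with coefficients polynomial in $\cos\theta$. Similarly $\tr E = \tr([t]_\times R)$ reduces to $\sin\theta$ times a contraction that, once squared, contributes another $\|t\|^2$-versus-$\rho^2$ combination. Plugging all three traces and $\tr(EE^\top) = 2\|t\|^2$ into the left-hand side of~\eqref{eq:e-tau}, with $\tau = 1 + 2\cos\theta$, I anticipate the $\|t\|^2$ terms and the $\rho^2$ terms to cancel separately, leaving $0$.

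Alternatively — and this is the route I would actually prefer for a clean write-up — one can avoid the axis decomposition entirely and argue via the characteristic polynomial of $E$. The singular values of an essential matrix are $\{\sigma, \sigma, 0\}$ for some $\sigma \ge 0$, but more usefully the \emph{eigenvalues} of $E = [t]_\times R$ are known to be $0$ and a complex-conjugate pair; combining the Cayley--Hamilton relation for $E$ with the known values $\tr(EE^\top) = 2\|t\|^2$, $\det E = 0$, and the classical essential-matrix identity~\eqref{eq:eee}, one can express $\tr(E^2)$ and $(\tr E)^2$ in terms of $\|t\|^2$ and the eigenvalue data, and then relate the latter to $\tau$ through $\tr E = \tr([t]_\times R)$. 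Either way the statement is a polynomial identity in a small number of scalars, so once the reductions are in place it is a finite check.

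The hard part will be bookkeeping rather than conceptual: correctly reducing $\tr([t]_\times R [t]_\times R)$ without sign errors, and making sure the dependence on the relative orientation between $t$ and the axis $r$ enters \emph{only} through $\rho = r^\top t$ (equivalently through $\delta$), which is what ultimately makes the $\SE(3)$-invariant interpretation work. I would organize the computation as three short lemmas — one for each of $\tr(EE^\top)$, $\tr E$, and $\tr(E^2)$ in terms of $\|t\|^2,\ r^\top t,\ \theta$ — and then the theorem follows by substituting $\tau = 2\cos\theta + 1$ and simplifying.
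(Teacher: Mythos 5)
Your plan is sound, and it is worth noting that the paper itself gives no proof of this identity --- it simply cites~\cite{Mart18} --- so a self-contained verification like yours is a genuine addition rather than a rederivation. I checked the reductions you outline and they do close up: with $r$ the unit axis, $\rho = r^\top t$ and $\tau = 2\cos\theta + 1$, one gets
\begin{equation*}
\tr(EE^\top) = 2\|t\|^2, \qquad \tr E = -2\rho\sin\theta, \qquad \tr(E^2) = 2\rho^2\bigl(1 - 2\cos^2\theta\bigr) - 2\bigl(\|t\|^2 - \rho^2\bigr)\cos\theta,
\end{equation*}
and substituting these into the left-hand side of~\eqref{eq:e-tau} makes the $\|t\|^2$ terms cancel exactly, while the $\rho^2$ coefficient reduces to $4(1+\cos\theta)(1+\cos\theta-2\cos^2\theta) - 4(1+2\cos\theta)(1-\cos^2\theta) = 0$ via the factorization $1+\cos\theta-2\cos^2\theta = (1-\cos\theta)(1+2\cos\theta)$. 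So your anticipated separate cancellation of the $\|t\|^2$ and $\rho^2$ contributions is real, and the dependence on the geometry does enter only through $\rho$, as you require. Two small remarks: (i) the identity holds for \emph{all} $t$ and $R$, with no assumption $\rho = 0$, which your setup correctly does not impose; (ii) your second, eigenvalue/Cayley--Hamilton route is plausible but is left much vaguer --- in particular you would still need an independent handle on how $\tr E$ relates to $\tau$ (essentially the Maybank-type identity $\tr E = -2\rho\sin\theta$ that the paper invokes for Theorem~\ref{thm:tr-E}), so the axis-decomposition computation you describe first is the one to actually write down.
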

% libo: Very weird, in llns template, two consecutive theorems will eat the header of the second one. Add this rule to workaround.
\rule{0pt}{0pt}
\begin{theorem}
\label{thm:tr-E}
Let $E = [t]_\times R$ be an essential matrix and $R$ be a rotation through an angle $\theta$ around a vector $r \neq 0$. If $\delta = r^\top t = 0$, then
\begin{equation}
\tr E = 0.
\label{eq:e-trace}
\end{equation}
Conversely, if $\tr E = 0$, then either $\delta = 0$ or $\theta = \pi k$ for a certain integer~$k$.
\end{theorem}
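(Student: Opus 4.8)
The plan is to derive an explicit formula for $\tr E$ purely in terms of the $\SE(3)$ invariants $\theta$ and $\delta$; both halves of the statement then follow by inspection. I would start from the Rodrigues form of the relative rotation, $R = I + \sin\theta\,[r]_\times + (1-\cos\theta)\,[r]_\times^2$ with $\|r\|=1$, which is equivalent to the quaternion parametrization~\eqref{eq:quat-to_rot} under $r = u/\|u\|$ (one may equally run the whole argument directly with~\eqref{eq:quat-to_rot}). Substituting into $E = [t]_\times R$ splits $E$ into the three summands $[t]_\times$, $\sin\theta\,[t]_\times[r]_\times$ and $(1-\cos\theta)\,[t]_\times[r]_\times^2$, so $\tr E$ is the sum of their traces.

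The traces are computed from two elementary facts: $\tr[a]_\times = 0$ for every vector $a$, and $[a]_\times[b]_\times = b a^\top - (a^\top b) I$. The first summand contributes $0$. For the second, $\tr([t]_\times[r]_\times) = \tr\!\big(r t^\top - (r^\top t) I\big) = r^\top t - 3\,r^\top t = -2\delta$, so it contributes $-2\delta\sin\theta$. For the third, $[r]_\times^2 = r r^\top - I$ gives $[t]_\times[r]_\times^2 = [t]_\times r\, r^\top - [t]_\times$, whose trace is $r^\top([t]_\times r) = r^\top(t\times r) = 0$ by the vanishing of a scalar triple product with a repeated argument. Collecting everything yields the key identity $\tr E = -2\,\delta\,\sin\theta$.

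Both implications are then immediate. If $\delta = r^\top t = 0$ (equivalently $\mathsf{f}^0 = u^\top t = 0$ of~\eqref{eq:f0}), then $\tr E = 0$. Conversely, $\tr E = 0$ forces $\delta \sin\theta = 0$, hence $\delta = 0$ or $\sin\theta = 0$, i.e. $\theta = \pi k$ for some integer $k$. If one prefers to stay strictly within~\eqref{eq:quat-to_rot}, the same computation instead produces $\tr E = 4\sigma\,(u^\top t)$, and $\tr E = 0$ then gives $\sigma = 0$ (so $\theta = \pi + 2\pi k$) or $u^\top t = 0$ (so $\delta = 0$ when $u \neq 0$, and the degenerate $R = I$, $\theta = 2\pi k$ case otherwise) — all covered by the stated alternative.

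I do not foresee a genuine obstacle; the computation is short and mechanical. The only points requiring care are the sign and normalization conventions linking $(\sigma, u)$ to $(\theta, r)$ — in particular using $\|r\| = 1$ when simplifying $[r]_\times^2$ — noting that the unit-quaternion double cover of $\mathrm{SO}(3)$ does not affect the vanishing conclusions, and giving a one-line treatment of the undefined-axis case $R = I$, which the ``$\theta = \pi k$'' branch already absorbs.
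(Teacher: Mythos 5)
Your proof is correct and rests on exactly the same key identity as the paper's, namely $\tr E = -2\sin\theta\,(r^\top t)$: the paper simply cites this as Proposition 2.20 of Maybank's book, whereas you derive it in a few lines from Rodrigues' formula and the identities $\tr[a]_\times=0$, $[a]_\times[b]_\times = b a^\top - (a^\top b)I$, which makes the argument self-contained (and your sign remark about the quaternion variant $\tr E = \pm4\sigma\,u^\top t$ is only a convention issue with Eq.~\eqref{eq:quat-to_rot} and does not affect the vanishing conclusions).
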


\begin{proof}
% We parameterize matrix~$R$ by Rodrigues' formula
% \[
% R = \cos\theta I + (1 - \cos\theta)rr^\top + \sin\theta [r]_\times,
% \]
% which is an equivalent form of Eq.~\eqref{eq:quat-to_rot}. Then,
% \begin{multline}
% \tr E = \tr([t]_\times R) \\= (1 - \cos\theta) \tr([t]_\times rr^\top) + \sin\theta \tr([t]_\times [r]_\times).
% \end{multline}
% Since
% \begin{align}
% \tr([t]_\times rr^\top) = -\tr(r^\top[r]_\times t) &= 0,\\
% \tr([t]_\times [r]_\times) = \tr(rt^\top) - r^\top t \tr(I) &= -2(r^\top t),
% \end{align}
We utilize Proposition 2.20 from \cite{maybank2012theory} that $\tr E = -2\sin\theta(r^\top t)$ and the statement follows.
\end{proof}

Using Eqs.~\eqref{eq:det-e}~--~\eqref{eq:e-trace}, problems 4P-RA, 4P-ST0, and 3P-RA-ST0 can be formulated in terms of matrix~$E$.
Table~\ref{tab:template-size} summarizes these formulations with reference template matrix size generated by automatic polynomial solver generators \cite{KBP08} and \cite{li2020gaps}.

\begin{SCtable}[1][t]
\centering\footnotesize
\begin{tabular}{llrrrrrr}
\hline
Problem & Form & \#V & D & \#S & AG & GAPS & Proposed\\
\hline
5P & SIR3+$u$ & $4$ & $6$ & $40$ & $146 \PLH 186$ & $116 \PLH 200^1$ & - \\
5P & SIR3+$v$ & $3$ & $6$ & $\infty$ & - & - & - \\
5P & SIR2+$u$ & $4$ & $4$ & $40$ & $90 \PLH 130$ & $60 \PLH 80^1$ & - \\
5P & SIR2+$v$ & $3$ & $4$ & $20$ & $31 \PLH 51$ & $36 \PLH 56$ & - \\
5P & NullE & $3$ & $3$ & $10$ & $10 \PLH 20$ & $10 \PLH 20$ & - \\\hline
4P-RA & SIR2+$u$ & $3$ & $4$ & $20$ & $26 \PLH 46$ & $36 \PLH 56$ & $16\PLH36$~\cite{MartLi19} \\
4P-RA & NullE & $4$ & $3$ & $20$ & $34\PLH 54^2$ & $50 \PLH 70$ & - \\\hline
4P-ST0 & SIR2+$u$ & $3$ & $4$ & $20$ & $62 \PLH 82^2$ & $38 \PLH 65^1$ & - \\
4P-ST0 & SIR2+$v$ & $3$ & $4$ & $10$ & $25 \PLH 35$ & $27 \PLH 35$ & - \\
4P-ST0 & NullE & $3$ & $3$ & $10$ & $10 \PLH 20$ & $10 \PLH 20$ & $10 \PLH 20$ \\\hline
3P-RA-ST0 & SIR2+$u$ & $3$ & $4$ & $12$ & $23 \PLH 35^2$ & $28 \PLH 35$ & $13 \PLH 25$ \\
3P-RA-ST0 & NullE & $5$ & $3$ & $20$ & $34 \PLH 54$ & $50 \PLH 70$ & - \\
3P-RA-ST0 & NullEx & $5$ & $3$ & $12$ & $22 \PLH 35^2$ & $53 \PLH 65$ & - \\
\hline
\end{tabular}
    \caption[]{Comparison of different solution formulations for the minimal relative pose problems with $\SE(3)$ invariants. \\
{\scriptsize \#V:~Number of variables\\
    D:~ Highest degree\\
    \#S:~Number of solutions\\
    AG:~Generator from~\cite{KBP08}\\
    GAPS:~Generator from~\cite{li2020gaps}\\
    $1$:~Mirrored roots $u$ merged by~\cite{larsson2016uncovering}\\
    $2$:~The largest of cascaded templates reported}
}
\label{tab:template-size}
\end{SCtable}

%\item In case $\sigma$ is known the SIR2 form generates simpler solutions, whereas the NullE form has the lowest polynomial degree, but introduces more variables, which increases the problem complexity.

\begin{remark}
As is well known, the 5P problem in the NullE formulation has $10$ solutions. Each essential matrix corresponds to a twisted pair of rotations~\cite{HZ} and each rotation, being represented by a unit quaternion, doubles due to the sign ambiguity. Therefore, as shown in Table~\ref{tab:template-size}, the 5P problem in SIR2/SIR3+$u$ has $40$ solutions while in SIR2/SIR3+$v$ has only $20$ solutions.
The 4P-RA problem in the NullE formulation has 20 solutions, corresponding to $40$ rotations. For each pair of rotations, there is a unique one whose rotation angle equals known~$\theta$. Similarly for 4P-ST0 in NullE, each pair of rotations corresponding to an essential matrix contains a unique valid rotation.
\end{remark}

\begin{remark}
\label{rem:7cubics}
The 3P-RA-ST0 problem has $12$ solutions in SIR2+$u$. However, in NullE the system consisting of Eqs.~\eqref{eq:det-e}~--~\eqref{eq:e-trace} has $20$ solutions. The obtained contradiction indicates that there must exist additional polynomial constraints on essential matrix~$E$. Using the implicitization algorithm~\cite{CLS}, we found that the entries of $E$ additionally satisfy $7$ cubic equations. We provided them in the supplementary material.
%For example, one of these equations has the form
%\begin{multline}
%E_{22}E_{33}A_{12}\tau' - (E_{11}E_{13} + %E_{33}E_{31})A_{23}\tau'\\+ (E_{12}E_{31} + E_{21}E_{13} + %E_{32}(E_{33} - E_{11}))A_{13}\tau' + 2A_{12}A_{13}^2 = 0,
%\end{multline}
%where $\tau' = \tau + 1$, $A = E - E^\top$, and $E_{ij}$ (resp. $A_{ij}$) are the entries of matrix $E$ (resp. $A$).
The above polynomial system complemented with the new $7$ cubics has $12$ solutions (NullEx in Table~\ref{tab:template-size}). However, due to the lack of geometric interpretability for the additional cubics, in this paper we use a hand-crafted solver with slightly larger template matrix.
\end{remark}

\section{Minimal Relative Pose Solvers with $\SE(3)$ Constraints}
\label{sec:solvers}
The goodness of different solver formulations is reflected by the size of the matrix template for Gr\"obner basis computation since it directly affects both the speed and numerical accuracy of a minimal solver. Different formulated solvers are reported in Table~\ref{tab:template-size}. We compared our proposed formulation with~\cite{KBP08}, the most widely used generator the past years, and~\cite{li2020gaps}, a wrapper of a newer generator \cite{larsson2017efficient}.

\subsection{5P, 4P-RA, 5P-ST1 and 4P-RA-ST1}
NullE is the most widely used polynomial formulation for 5P, with a template matrix of $10\times20$. Template matrix of 4P-RA was recently reduced from $270\times290$ to $16\times36$ using SIR2-$u$ \cite{MartLi19}. 5P-ST1 can be solved by a 5P solver and multiply the unit translation solution $t$ by $\frac{\delta}{r^\top t}$. 4P-RA-ST1 can be solved by a 4P-RA solver in the same way.

%\subsection{4P-RA}

%For problem 4P-RA, the SIR2+$u$ formulation is preferable since leads to the smallest matrix template of size $16\times 36$. This solver has been recently introduced in~\cite{MartLi19}.

\subsection{4P-ST0}

The NullE formulation for 4P-ST0 produces the smallest template of size $10\times 20$. Note that in NullE of 4P-ST0, Eq.~\eqref{eq:e-trace} replaces an epipolar constraint of 5P and they are both linear on $E$. Therefore, 4P-ST0 can be simply solved by a NullE 5P solver by replacing the coefficients of one epipolar constraint.

\subsection{3P-RA-ST0}
\label{sec:3prast0}

For problem 3P-RA-ST0, the SIR2+$u$ formulation is preferable as it leads to the smallest $13\times 25$ matrix template. The algorithm is summarized as follows.

Three image point pairs are first used to form a $2\times 2$ matrix $F_{123}$, see Subsect.~\ref{sec:t2d}. We set $u^\top = \begin{bmatrix} \alpha & \beta & \gamma\end{bmatrix}$. Then our system consists of the following polynomial equations:
\begin{itemize}
\item $10$ equations of $m\cdot \mathsf{f}^\sigma = 0$ for $m$ being every monomial with degree up to~$2$;
\item $1$ equation $\det F_{123} = 0$;
\item $12$ equations of $m\cdot \det F_{ij}' = 0$, with $i \neq j$, $m \in \{\alpha, \beta, \gamma, 1\}$ and
$$
    F_{ij}' = \begin{bmatrix} \multicolumn{2}{c}{F_{ij}} \\ - u^\top q_i' & u^\top q_i''\end{bmatrix}.
$$
\end{itemize}
In matrix form the system can be written as $A x = 0_{23\times 1}$,
where $A$ is the $23\times 35$ coefficient matrix whose $i$-th row consists of coefficients of the $i$-th polynomial, $x$ is a monomial vector. Matrix $A$ is exactly the template produced by the Automatic Generator, see Table~\ref{tab:template-size}. However, the template's size can be further reduced if we take into account the special structure of matrix~$A$. Namely, if the first $10$ monomials in $x$ are
\begin{equation}
\label{eq:first10mons}
\alpha^4, \ \alpha^3\beta, \ \alpha^2\beta^2, \ \alpha^3\gamma, \ \alpha^2\beta\gamma, \ \alpha^2\gamma^2, \ \alpha^3, \ \alpha^2\beta, \ \alpha^2\gamma, \ \alpha^2,
\end{equation}
then matrix $A$ has the following block form $A = \begin{bmatrix}U & V \\ W & X\end{bmatrix},
\label{eq:matrixA}$
where $U$ is an upper-triangular $10\times 10$ matrix with $1$'s on its main diagonal. We conclude that matrix $A$ is equivalent to $\begin{bmatrix}U & V \\ 0_{13\times 10} & B\end{bmatrix},
\label{eq:newmatrix}$
where matrix $B = X - WU^{-1}V$ is our final template of size $13\times 25$. Matrix $B$ contains all necessary data for deriving solutions either by constructing an action matrix or by forming the $12$-th degree univariate polynomial in accordance with the hidden variable method. We provided more details on the 3P-RA-ST0 solver in the supplementary material. Readers can refer to \cite{Faugere07,MartLi19} for more usage of the above simplification.

\subsection*{Degeneration Handling}

Condition~\eqref{eq:screw-def} becomes degenerate when the rotation matrix is close to~$I$. In this case the rotation axis $r$ is ill-posed and vector $u$ becomes arbitrarily small. Enforcing condition~\eqref{eq:f0} in this case might lead to a large deviation in the direction of translation. Nevertheless, this degenerate case can be easily covered by fitting relative pose to a translation-only motion (2P-TO). The skew-symmetric essential matrix $[t]_\times$ can be easily estimated from two image feature pairs. In this paper, we estimate 4P-ST0 and 3P-RA-ST0 together with 2P-TO and accept the results with more inliers.

\section{Experiments}

\subsection{Implementation Details}

All algorithms compared in experiments are implemented by C++. The hidden variable method is used to derive solutions of polynomial systems. Roots of univariate polynomials are found using Sturm sequences. We implement 4P-RA \cite{MartLi19}, 4P-ST0, and 3P-RA-ST0\footnote{
% Source code link will be revealed after the review procedure.
Source codes are available at \texttt{http://github.com/prclibo/relative_pose}
}.
The C++ 5P solver from~\cite{Hartley2012} is used, which is regarded as the state-of-the-art fast implementation.
% We use the standard implementation from~\cite{Hartley2012} instead of its optimized version which we found to be numerically less stable.
Runtime statistics on an i5-4288U is listed in Table~\ref{tab:runtime}.

\subsection{Synthetic Data}
\label{sec:synthetic}

\begin{SCfigure}[1.1][t]
\centering
\begin{tabular}{cc}
\includegraphics[width=0.25\textwidth]{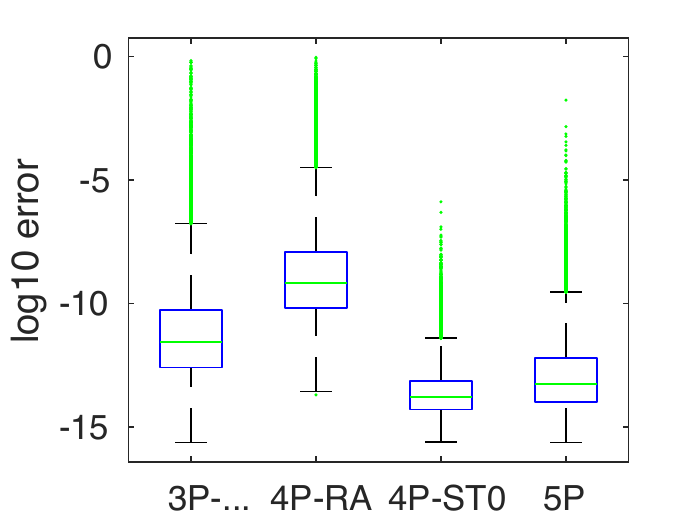} &
\includegraphics[width=0.25\textwidth]{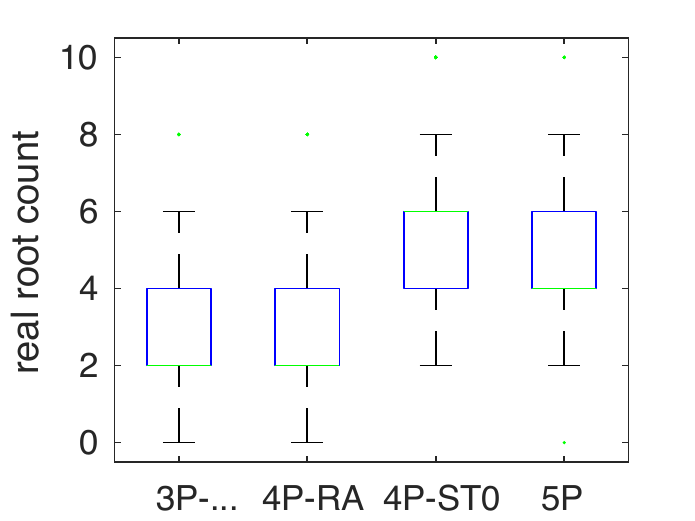} \\
(a) & (b) \\
\end{tabular}
\caption{(a) Numerical accuracy comparison of the solvers; (b) Statistics on the number of real roots for each solver}
\label{fig:numerical}
\end{SCfigure}

\begin{SCtable}[1][t]
\footnotesize
\centering
\begin{tabular}{lrrrr}
\hline
Minimal Solver & 3P-RA-ST0 & 4P-RA & 4P-ST0 & 5P \\ \hline
Average Time & 28 $\mu$s & 34 $\mu$s & 26 $\mu$s & 25 $\mu$s \\
\hline
\end{tabular}\smallskip
\caption{Average runtime comparison of the solvers}
\label{tab:runtime}
\end{SCtable}

\begin{figure*}[ht]
\centering
\begin{tabular}{cccc}
\includegraphics[width=0.25\textwidth]{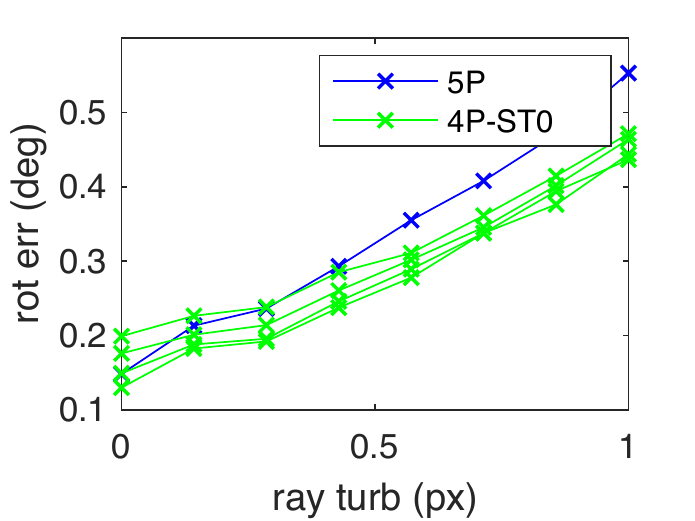}&
\includegraphics[width=0.25\textwidth]{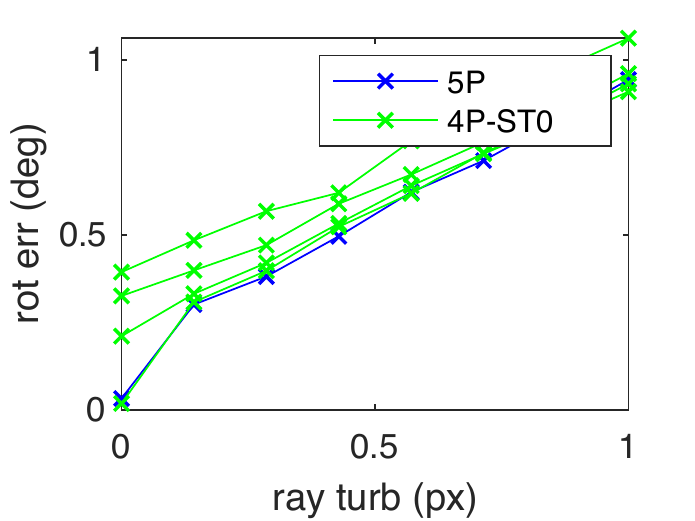}&
\includegraphics[width=0.25\textwidth]{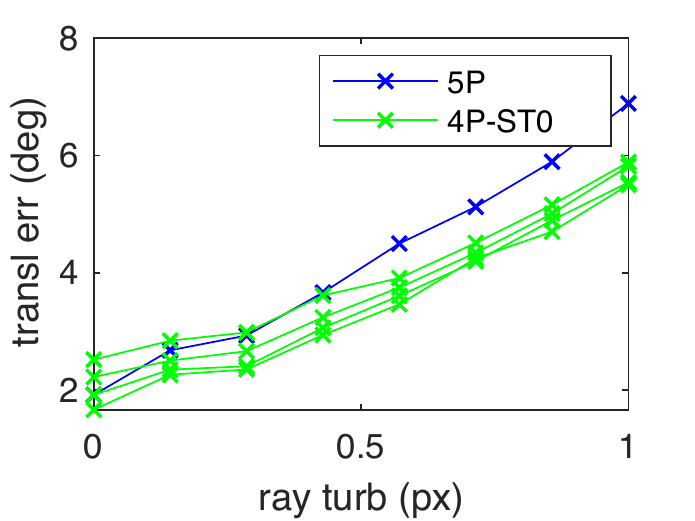}&
\includegraphics[width=0.25\textwidth]{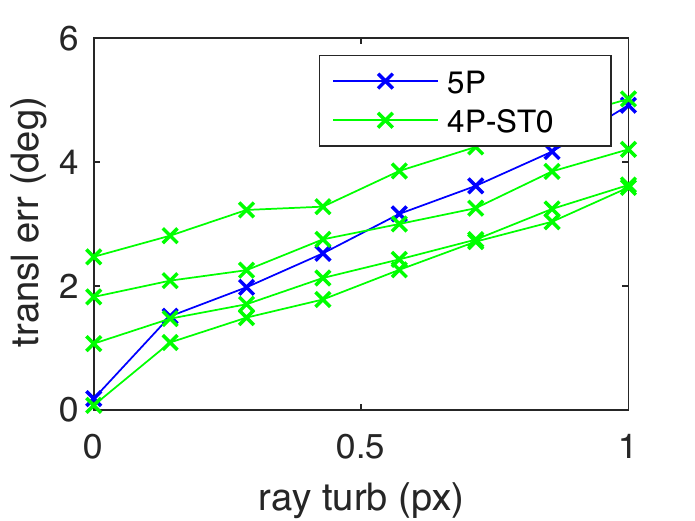}\\
(a) forward motion & (b) sideway motion & (c) forward motion & (d) sideway motion
\end{tabular}
\caption{Estimation error plot of 4P-ST0 and 5P on synthetic data: (a, b) rotation errors; (c, d) translation errors}
\label{fig:4pst0-ransac}
\end{figure*}

Synthetic data are generated to illustrate the algorithm performance. Synthetic image features are generated from a $60^\circ$ field of view with focal length in $500$px. We test algorithm performance under Gaussian image noise whose std ranges in $0\mbox{--}1$px. Synthetic data is generated for forward motion and sideway motion. Rotation angle of a pose pair is randomly generated from Gaussian with std of $5^\circ$. The rotation angle measurement is disturbed by Gaussian noise (derived from the widely used Brownian process model for IMU noise) with std ranging in $0\mbox{--}1^\circ$. To test the performance of 4P-ST0 and 3P-RA-ST0 under non-perfectly planar motion, we first produces unit translation with zero component on rotation axis. Then the translation is disturbed along the rotation axis with Gaussian noise whose std ranges in $0\mbox{--}5\%$.

\begin{figure*}[ht]
\centering
\begin{tabular}{cccc}
\includegraphics[width=0.25\textwidth]{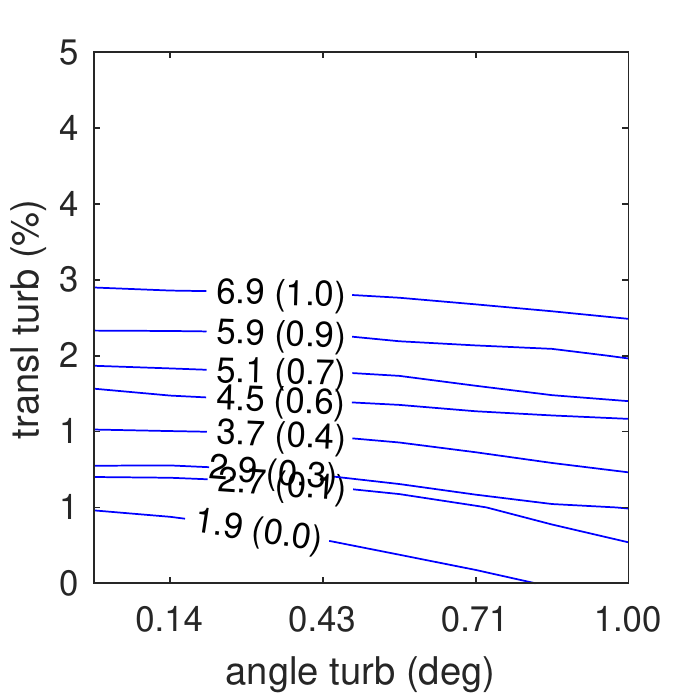} &
\includegraphics[width=0.25\textwidth]{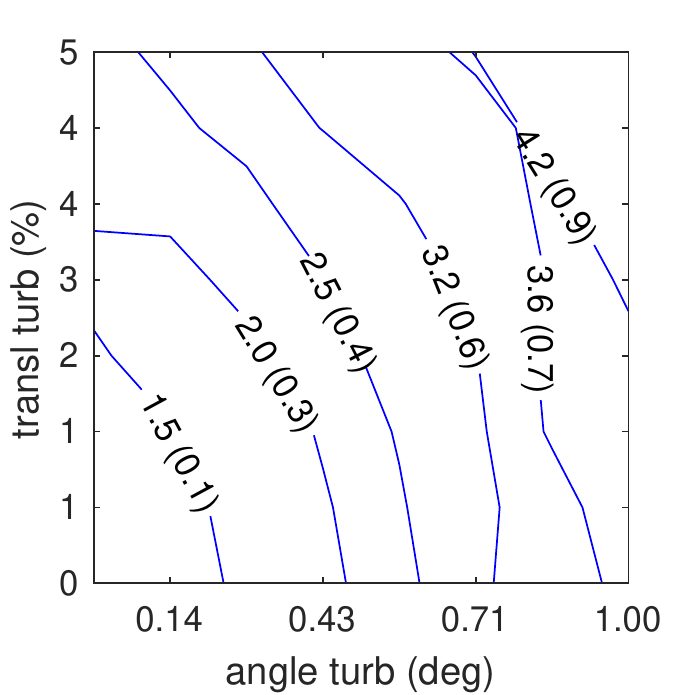} &
\includegraphics[width=0.25\textwidth]{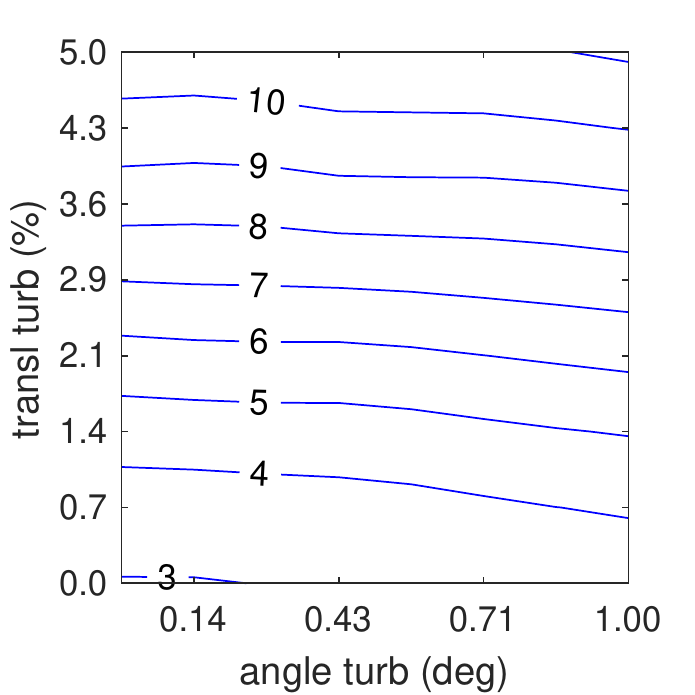} &
\includegraphics[width=0.25\textwidth]{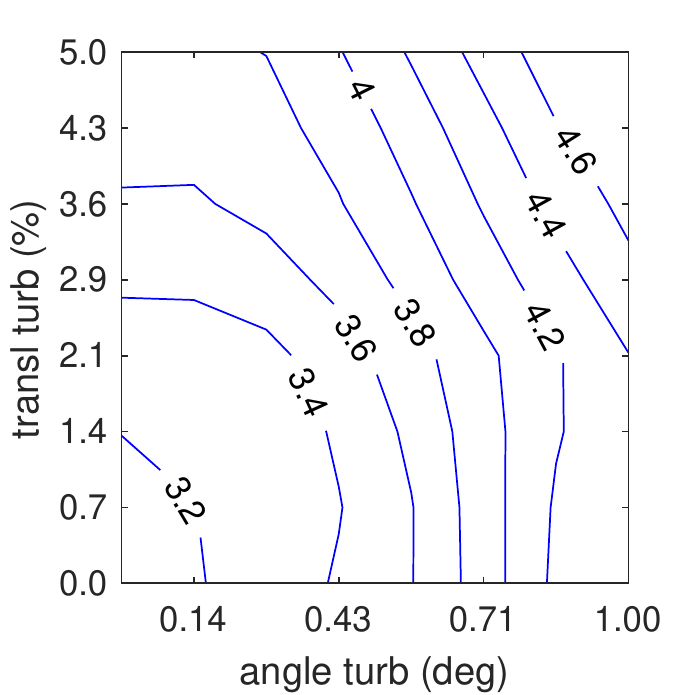} \\
(a) forward motion & (b) sideway motion & (c) forward motion & (d) sideway motion
\end{tabular}
\caption{Estimation error comparison of 3P-RA-ST0 and 5P: (a, b) contour curves $\xi(\epsilon)$ denote that under image feature noise std $\epsilon$ px, when the error of rotation angle and screw translation is at the bottom left of the curve, 3P-RA-ST0 outperforms 5P with translation error no more than~$\xi$; (c, d) translation error contour curve over different rotation angle and screw translation disturbance, with image feature noise std fixed as $1$px}
\label{fig:3prast0-ransac}
\end{figure*}

The numerical accuracy of each algorithm is compared and listed in Fig.~\ref{fig:numerical}(a). The numerical error is measured by the value $\min\limits_i\|R_i - \bar R\|$, where $i$ counts all real solutions and $\bar R$ is the ground truth relative rotation matrix. The number of real roots is also counted for each algorithm and listed in Fig.~\ref{fig:numerical}(b). We observe that 4P-ST0 in NullE formulation generally has more real roots compared to 5P. The number of real roots also affects the computational efficiency in some RANSAC frameworks like OpenCV where each real solution must be verified by computing the reprojection error over all image feature pairs.

In the experiments on both synthetic data and real data, the error of rotation is measured by the rotation angle between the estimated and groundtruth rotation. The error of translation is measure by the angle between the unit groundtruth translation and the estimated translation. Forward motion and sideway motion are experimented separately. The mean estimation error of 4P-ST0 and 5P against image ray disturb is shown in Fig.~\ref{fig:4pst0-ransac}. The estimation is executed on $100$ image feature pairs with $30\%$ outliers under RANSAC. Green curves from bottom to top represent 4P-ST0 estimation with different screw translation disturbance along rotation axis $\{0\%, 1.66\%, 3.33\%, 5\%\}$. As it is mentioned in many previous works, the rotation error is generally small for different solvers. Regarding translation error, the advantage of 4P-ST0 over 5P is more significant for forward motion, which is considered as the more common and difficult case than sideway motion.

The performance of 3P-RA-ST0 is affected by rotation angle error, screw translation error and also image feature error. To simplify our visualization, in Fig.~\ref{fig:3prast0-ransac}(a) and Fig.~\ref{fig:3prast0-ransac}(b), we fix the image feature error at different level and consider the translation error surface against the rotation angle and screw translation errors. The intersection contour of this surface with the error surface of 5P is plotted. The bottom-left area of each curve denotes the error level under which 3P-RA-ST0 can outperform 5P. We only compare the translation error here as the rotation error is generally similar for different methods. Compared to 4P-ST0, we find that 3P-RA-ST0 is more sensitive to screw translation error in forward motion. For example, with perfect rotation angle and image feature noise std as $1$px in forward motion, 3P-RA-ST0 outperforms 5P only when screw translation error is less than $3\%$ (Fig.~\ref{fig:3prast0-ransac}(a)), while 4P-ST0 outperforms 5P even when screw translation error is $5\%$ (Fig.~\ref{fig:4pst0-ransac}(c)).

\subsection{Real-World Data}

We compare our approaches on multiple datasets collected on indoor mobile robots or outdoor autonomous vehicles, which are two popular modern robot applications with planar motion. Experimented datasets include:

\begin{itemize}
\item RawSeeds-Bicocca~\cite{Bonarini06}: Indoor mobile robot data, with IMU and odometer available for rotation angle. Front camera (FC) images are used.

\item TUM-RGBD-SLAM~\cite{sturm12iros}, Robot@Home~\cite{Ruiz17}: Indoor mobile robot data, without available angle measurement. Images from the front RGBD camera (FC) are used. The left RGBD camera (LC) of Robot@Home is also experimented.

\item KITTI~\cite{Geiger12}, UMich~\cite{pandey11}, RobotCar~\cite{Maddern17}: Autonomous vehicle data, with fused GPS/INS data available. Front camera (FC) images are used. The left camera (LC) of UMich is also used. Rotation angle from the fused GPS/INS pose is used.
\end{itemize}

Consecutive image pairs with translational movement larger than $0.1$m in indoor data and $1$m in outdoor data are used in experiments. Performance comparison on indoor data is shown in Table~\ref{tab:indoor}. It is seen that 4P-ST0 outperforms 5P in almost all cases, which is consistent with the synthetic data results. With IMU data on RawSeeds, 3P-RA-ST0 further improves the estimation. Results with odometry angle have no improvement, implying the accuracy of odometry angle might be low in this dataset.

On autonomous driving data in Table~\ref{tab:outdoor} however, the performance varies. We observe that for environments like broad road or highway, 4P-ST0 outperforms 5P. For environments like urban narrow road, 5P has the better accuracy. This environment difference corresponds to different screw translation disturbance on a planar motion assumption, as roads are less planar and vehicles might tilt more on urban road. With some more analysis, we found that a portion of relative poses have screw translation of more than $20\%$ ($r^\top t / \|t\| > 0.2$) in urban autonomous driving scenarios, which explains the poor performance of 4P-ST0.

We also note that 3P-RA-ST0 performs better on UMich left camera than 4P-ST0, corresponding to the observation in Fig.~\ref{fig:3prast0-ransac} that 3P-RA-ST0 is less sensitive to screw translation error for sideway motion.
% Besides, 3P-RA-ST0 performs poorly for RobotCar and some sequences in KITTI. We found that this is due to two reasons: 1) 3P-RA-ST0 is more sensitive to screw translation disturbance than 4P-ST0 in forward motion as observed in synthetic experiments; 2) some autonomous driving datasets have very high resolution images, which relatively increases the precision of image feature positions.

\section{Conclusions}

In this paper we show that known $\SE(3)$ invariants can be used to constrain the minimal relative pose estimation problem. Compared to existing relative pose problems with contraints, the proposed methods are more flexible and convenient since extrinsics are not required to transform $\SE(3)$ invariant to the camera frame.

We also comprehensively revise and relate to each other existing formulations of the relative pose problem. The discovered relationship provides a deeper understanding to these previous methods. This knowledge help formulate the most efficient solvers for the proposed relative pose problem with $\SE(3)$ constraints.

A series of experiments on synthetic and real datasets show practicality of the proposed solvers in robotic perception especially for indoor robots.

\begin{table}[!h]
% Minipage top aligned table:
% https://texblog.org/2007/08/01/placing-figurestables-side-by-side-minipage/#comment-4322
\begin{minipage}[t]{0.55\textwidth}
	\centering\footnotesize
	\begin{tabular}[t]{crrrrrrr}
		\hline
		\multirow{2}{*}{Dataset} & \multirow{2}{*}{\%} & \multicolumn{2}{c}{3P-RA-ST0} & \multicolumn{2}{c}{4P-RA} & \multirow{2}{*}{4P-ST0} & \multirow{2}{*}{5P}\\
		\cline{3-6}
		&& Odo & IMU & Odo & IMU & &\\ \hline
		\multirow{3}{*}{\begin{tabular}{@{}c@{}}RSeeds \\ 0226b\end{tabular}}
		& 25 & 8.9 & \textbf{5.6} & 10.3 & {6.7} & 8.8 & 10.8 \\
		& 50 & 16.0 & \textbf{9.4} & 16.0 & {10.5} & 16.1 & 17.5 \\
		& 75 & 26.9 & \textbf{15.4} & 29.1 & { 15.7} & 29.1 & 27.7 \\ \hline
		\multirow{3}{*}{\begin{tabular}{@{}c@{}}RSeeds \\ 0226a\end{tabular}}
		& 25 & 7.3 & \textbf{4.5} & 10.5 & 6.7 & 7.0 & 9.1 \\
		& 50 & 14.6 & \textbf{7.7} & 18.7 & 11.0 & 12.5 & 15.1 \\
		& 75 & 27.4 & \textbf{11.5} & 33.7 & 15.7 & 21.8 & 23.6 \\ \hline
		\multirow{3}{*}{\begin{tabular}{@{}c@{}}RSeeds \\ 0225b\end{tabular}}
		& 25 & 10.1 & \textbf{5.5} & 10.7 & 7.5 & 8.8 & 10.6 \\
		& 50 & 17.4 & \textbf{9.2} & 17.1 & 11.2 & 16.4 & 17.8 \\
		& 75 & 35.9 & \textbf{16.2} & 34.7 & 16.8 & 30.8 & 30.0 \\ \hline
		\multirow{3}{*}{\begin{tabular}{@{}c@{}}RSeeds \\ 0225a\end{tabular}}
		& 25 & 9.1 & \textbf{4.5} & 11.5 & 6.6 & 6.9 & 9.8 \\
		& 50 & 17.8 & \textbf{8.3} & 20.4 & 11.8 & 13.3 & 15.8 \\
		& 75 & 32.6 & \textbf{14.0} & 39.3 & 17.4 & 22.8 & 24.9 \\ \hline
	\end{tabular}
	\\\vspace{6pt}
        \centering\footnotesize
	\begin{tabular}{crrr}
		\hline
		Dataset & \% & {4P-ST0} & {5P}\\ \hline
		\multirow{3}{*}{\begin{tabular}{@{}c@{}}TUM \\ RGBD\\ \#360\end{tabular}}
		& 25 & \textbf{3.7} & 4.1\\
		& 50 & \textbf{6.4} & 6.6\\
		& 75 & \textbf{12.1} & 14.0\\ \hline
		\multirow{3}{*}{\begin{tabular}{@{}c@{}}TUM \\ RGBD\\ \#2\end{tabular}}
		& 25 & \textbf{2.9} & 4.3 \\
		& 50 & \textbf{5.7} & 7.7 \\
		& 75 & 16.0 & \textbf{15.2} \\ \hline
		\multirow{3}{*}{\begin{tabular}{@{}c@{}}R@H \\ anto\\ s1 FC\end{tabular}}
		& 25 & \textbf{5.1} & 7.9 \\
		& 50 & \textbf{10.3} & 14.3 \\
		& 75 & \textbf{15.7} & 22.8 \\ \hline
		\multirow{3}{*}{\begin{tabular}{@{}c@{}}R@H \\ alma\\ s1 FC\end{tabular}}
		& 25 & \textbf{3.4} & 7.2 \\
		& 50 & \textbf{9.3} & 12.0 \\
		& 75 & \textbf{16.3} & 16.9 \\ \hline
		\multirow{3}{*}{\begin{tabular}{@{}c@{}}R@H \\ pare\\ s1 FC\end{tabular}}
		& 25 & \textbf{4.9} & 5.3 \\
		& 50 & \textbf{8.5} & 9.9 \\
		& 75 & \textbf{16.1} & 23.5 \\ \hline
		\multirow{3}{*}{\begin{tabular}{@{}c@{}}R@H \\ sarmis\\ s1 FC\end{tabular}}
		& 25 & \textbf{7.3} & 9.0 \\
		& 50 & \textbf{12.2} & 17.4 \\
		& 75 & \textbf{21.4} & 28.0 \\ \hline
	\end{tabular}~
        \begin{tabular}{crrr}
		\hline
		Dataset & \% & {4P-ST0} & {5P}\\ \hline
		\multirow{3}{*}{\begin{tabular}{@{}c@{}}TUM \\ RGBD\\ \#1\end{tabular}}
		& 25 & \textbf{2.7} & 3.8 \\
		& 50 & \textbf{6.2} & 7.2 \\
		& 75 & \textbf{14.9} & 15.2 \\ \hline
		\multirow{3}{*}{\begin{tabular}{@{}c@{}}TUM \\ RGBD\\ \#3\end{tabular}}
		& 25 & \textbf{2.7} & 3.1 \\
		& 50 & \textbf{5.9} & 6.2 \\
		& 75 & {13.2} & \textbf{12.3} \\ \hline
		\multirow{3}{*}{\begin{tabular}{@{}c@{}}R@H \\ anto\\ s1 LC\end{tabular}}
		& 25 & \textbf{8.7} & 9.2 \\
		& 50 & \textbf{14.3} & 21.5 \\
		& 75 & \textbf{22.9} & 39.3 \\ \hline
		\multirow{3}{*}{\begin{tabular}{@{}c@{}}R@H \\ alma\\ s1 LC\end{tabular}}
		& 25 & \textbf{8.1} & 8.5 \\
		& 50 & \textbf{13.3} & 15.4 \\
		& 75 & \textbf{22.4} & 24.8 \\ \hline
		\multirow{3}{*}{\begin{tabular}{@{}c@{}}R@H \\ pare\\ s1 LC\end{tabular}}
		& 25 & \textbf{5.0} & 6.4 \\
		& 50 & \textbf{8.6} & 11.7 \\
		& 75 & \textbf{13.7} & 21.5 \\ \hline
		\multirow{3}{*}{\begin{tabular}{@{}c@{}}R@H \\ sarmis\\ s1 LC\end{tabular}}
		& 25 & \textbf{9.4} & 12.9 \\
		& 50 & \textbf{17.1} & 29.8 \\
		& 75 & \textbf{34.7} & 46.9 \\ \hline
	\end{tabular}
	\\\vspace{5pt}
	\caption{Translation error angle (deg) quantiles on indoor real data. Adding $\SE(3)$ invariant measurements improves relative pose estimation in most cases}
	\label{tab:indoor}
\end{minipage}
\hspace{5pt}
\begin{minipage}[t]{0.43\textwidth}
	\centering\footnotesize
	\begin{tabular}[t]{crrrrr}
		\hline\rule{0pt}{12pt}
		Dataset & \% & 3P-... & 4P-RA & 4P-ST0 & 5P\\[2pt] \hline
		\multirow{3}{*}{\begin{tabular}{@{}c@{}}~UMich~ \\ \#1 FC\end{tabular}}
		& 25 & 2.3 & 3.1 & \textbf{2.0} & 2.2\\
		& 50 & 5.1 & 7.7 & \textbf{4.2} & 4.6\\
		& 75 & 14.2 & 37.0 & \textbf{9.6} & 9.9\\ \hline
		\multirow{3}{*}{\begin{tabular}{@{}c@{}}UMich \\ \#2 FC\end{tabular}}
		& 25 & 1.7 & 1.7 & \textbf{1.5} & 1.6\\
		& 50 & 3.0 & 3.0 & \textbf{2.8} & 2.9\\
		& 75 & 5.0 & 5.1 & 4.8 & \textbf{4.8} \\ \hline
		\multirow{3}{*}{\begin{tabular}{@{}c@{}}UMich \\ \#1 LC\end{tabular}}
		& 25 & 1.4 & 1.5 & \textbf{1.4} & 1.6 \\
		& 50 & \textbf{2.5} & 2.7 & 2.6 & 2.9 \\
		& 75 & \textbf{5.3} & 5.1 & 5.7 & 5.6 \\ \hline
		\multirow{3}{*}{\begin{tabular}{@{}c@{}}UMich \\ \#2 LC\end{tabular}}
		& 25 & 2.1 & 2.2 & \textbf{2.1} & 2.4 \\
		& 50 & \textbf{4.1} & 4.7 & 4.1 & 4.9 \\
		& 75 & \textbf{8.2} & 10.8 & 8.5 & 10.1 \\ \hline
		\multirow{3}{*}{\begin{tabular}{@{}c@{}} KITTI \\ \#1\end{tabular}}
		& 25 & 1.2 & 1.3 & 1.0 & \textbf{0.9} \\
		& 50 & 2.3 & 3.0 & 1.8 & \textbf{1.7} \\
		& 75 & 4.8 & 8.7 & 3.2 & \textbf{3.1} \\ \hline
	\end{tabular}
	\\\vspace{4pt}
	\begin{tabular}{crrrrr}
		\hline\rule{0pt}{12pt}
		Dataset & \% & 3P-... & 4P-RA & 4P-ST0 & 5P\\[2pt] \hline
		\multirow{3}{*}{\begin{tabular}{@{}c@{}}KITTI \\ \#4\end{tabular}}
		& 25 & 0.9 & 0.9 & \textbf{0.8} & 0.9 \\
		& 50 & 1.4 & 1.6 & \textbf{1.2} & 1.6 \\
		& 75 & 2.4 & 2.9 & \textbf{2.1} & 2.6 \\ \hline
		\multirow{3}{*}{\begin{tabular}{@{}c@{}}KITTI \\ \#6\end{tabular}}
		& 25 & 0.9 & 1.2 & \textbf{0.8} & 1.0 \\
		& 50 & 1.7 & 2.1 & \textbf{1.4} & 1.8 \\
		& 75 & 3.0 & 4.4 & \textbf{2.5} & 3.0 \\ \hline
		\multirow{3}{*}{\begin{tabular}{@{}c@{}}KITTI \\ \#9\end{tabular}}
		& 25 & 1.7 & 2.1 & \textbf{1.1} & 1.1 \\
		& 50 & 3.5 & 4.4 & 2.0 & \textbf{2.0} \\
		& 75 & 8.1 & 12.1 & 3.7 & \textbf{3.3} \\ \hline
		\multirow{3}{*}{\begin{tabular}{@{}c@{}} RobotCar\\ 05-14\\13:46 \end{tabular}}
		& 25 & 4.5 & 3.3 & 2.4 & \textbf{2.2} \\
		& 50 & 9.3 & 6.4 & 3.4 & \textbf{2.8} \\
		& 75 & 26.0 & 20.9 & 5.5 & \textbf{3.6} \\ \hline
		\multirow{3}{*}{\begin{tabular}{@{}c@{}} RobotCar\\ 06-26\\08:53 \end{tabular}}
		& 25 & 3.7 & 2.5 & 2.7 & \textbf{2.3} \\
		& 50 & 5.6 & 3.6 & 3.9 & \textbf{2.9} \\
		& 75 & 10.1 & 5.6 & 6.8 & \textbf{3.7} \\ \hline
	\end{tabular}
	\\\vspace{5pt}
	\caption{Translation error angle (deg) quantiles on outdoor autonomous driving real data}
	\label{tab:outdoor}
\end{minipage}
\end{table}

\medskip
\noindent\small{\textbf{Acknowledgements.} The work of E.M. was supported by Act 211 Government of the Russian Federation, contract No.~02.A03.21.0011.}

\clearpage
% ---- Bibliography ----
%
% BibTeX users should specify bibliography style 'splncs04'.
% References will then be sorted and formatted in the correct style.
%
\bibliographystyle{splncs04}
\bibliography{se3}
\clearpage

%\appendix
%\section{Title of Appendix A}

\setcounter{section}{0}
\setcounter{theorem}{0}
\title{Relative Pose Estimation of Calibrated Cameras with Known $\SE(3)$ Invariants}
\subtitle{\large Supplementary Material}

\author{Bo Li \and Evgeniy	Martyushev \and Gim Hee Lee}
\institute{}

\maketitle

\setcounter{equation}{22}
\setcounter{figure}{4}

\section{Seven Cubics from 3P-RA-ST0 under NullE (NullEx)}

\begin{theorem}
Let $E = [t]_\times R$ be an essential matrix with $R$ being a rotation around a vector $u$. If $u^\top t = 0$, then the following equations hold:
\begin{multline}
\label{eq:7cubics1}
(A_{ki}E_{ii}E_{jk} + A_{ij}E_{kk}E_{ii} - A_{jk}(E_{kk}E_{ik} - E_{ki}E_{jj} + E_{kj}E_{ji} + E_{ij}E_{jk}))\tau' \\+ 2A_{ij}A_{jk}^2 = 0,
\end{multline}
\begin{multline}
\label{eq:7cubics2}
(A_{12}A_{23}E_{31} + A_{12}A_{31}E_{32} + A_{12}E_{12}E_{33} + A_{23}A_{31}E_{21} + A_{23}E_{11}E_{23} + A_{31}E_{22}E_{31})\tau' \\+ 2A_{12}A_{23}A_{31} = 0,
\end{multline}
where $\tau' = \tr R + 1$, $A = E - E^\top$, and $E_{ij}$ (resp. $A_{ij}$) are the entries of matrix $E$ (resp. $A$). The indices $i$, $j$, $k$ are intended to be different in Eq.~\eqref{eq:7cubics1}.
\end{theorem}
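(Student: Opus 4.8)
The plan is to prove the two displayed families as polynomial identities satisfied by the parametrization $E = [t]_\times R$. First I would write $R$ as in \eqref{eq:quat-to_rot} through a unit quaternion $(\sigma, u)$, so that every entry $E_{ij}$ is a polynomial in $\sigma, u_1, u_2, u_3, t_1, t_2, t_3$, and by \eqref{eq:rotation-trace} the auxiliary scalar is $\tau' = \tr R + 1 = 4\sigma^2$. Since the hypothesis $u^\top t = 0$ is exactly \eqref{eq:f0}, it is enough to show that, after this substitution, the left-hand side of each claimed equation lies in the ideal $\mathcal I = \langle\, u^\top t,\; \|u\|^2 + \sigma^2 - 1\,\rangle$ in $\mathbb{R}[\sigma, u, t]$. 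Because this is checked as an identity, the possibly degenerate configurations ($R = I$, $t$ parallel to $u$, etc.) need no separate treatment.

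The key preparatory step is a compact form for the skew-symmetric part $A = E - E^\top$, which otherwise makes the expansion unwieldy. Expanding $E - E^\top = [t]_\times R + R^\top [t]_\times$ by Rodrigues' formula yields the general identity $E - E^\top = \big[(\tr R\cdot I - R)\,t\big]_\times$; reducing modulo $u^\top t = 0$ this collapses to $A = [a]_\times$ with $a = 2\sigma(\sigma t + t\times u)$, so each $A_{ij}$ is a linear form in $t$ with coefficients quadratic in $(\sigma,u)$, and $A_{23}, A_{31}, A_{12}$ are, up to sign, the three components of $a$. In parallel I would use $\tr E = 0$, which holds under $\delta = 0$ by Theorem~\ref{thm:tr-E}, to replace a diagonal entry $E_{kk}$ by $-(E_{ii} + E_{jj})$ whenever it shortens an expression.

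Finally I would exploit the $S_3$-equivariance of the setup to avoid checking all seven equations. Conjugation by a permutation matrix $P$ sends $E \mapsto P E P^\top$, which is again an essential matrix $[t']_\times R'$ with $t' = (\det P)\,Pt$, $R' = P R P^\top$, rotation axis $(\det P)\,Pr$, the same $\tau'$, and still zero screw translation; moreover $(PEP^\top)_{\pi a,\pi b} = E_{ab}$ and $A(PEP^\top)_{\pi a,\pi b} = A(E)_{ab}$ for the relabelling $\pi$ induced by $P$. Hence the six instances of \eqref{eq:7cubics1} indexed by the ordered triples of distinct $i,j,k$ form a single orbit, so verifying the $(i,j,k) = (1,2,3)$ case suffices for all of them, and it then remains only to verify the single equation \eqref{eq:7cubics2} on its own. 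Each of these two substitutions produces a polynomial that I would expand and reduce modulo $\mathcal I$, checking it vanishes identically. I expect this last expansion to be the only genuine obstacle: the cubics become degree-six in $(\sigma,u,t)$ after substitution, so the computation has to be organized carefully — carrying the $A$-factors in the $a$-form above, grouping the $E$-monomials by the recurring blocks $t$, $t\times u$, $\sigma t + t\times u$, and using $u^\top t = 0$ to clear the residue — but no conceptual difficulty remains, and in practice I would confirm the cancellation with a computer algebra system, as the cubics were originally obtained by implicitization.
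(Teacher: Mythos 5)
Your proposal is sound and would prove the theorem, but it runs in the opposite direction to the paper's argument. The paper does not verify the stated cubics by substitution: it obtains them (together with \eqref{eq:det-e}--\eqref{eq:e-trace}) as elements of the elimination ideal $J\cap\mathbb{C}[E_{11},\ldots,E_{33},\sigma]$, where $J$ is generated by $E_{ij}-([t]_\times R)_{ij}$, $u^\top t$ and $\|u\|^2+\sigma^2-1$, via a lexicographic Gr\"obner basis eliminating $t$ and $u$; membership in that elimination ideal is what proves the identities, and it simultaneously explains where the seven cubics come from and that they belong to a generating set of all constraints on $(E,\sigma)$. You instead take the cubics as given and check, after substituting the quaternion parametrization, that each left-hand side reduces to zero modulo $\langle u^\top t,\ \|u\|^2+\sigma^2-1\rangle$ --- a forward ideal-membership check that is logically sufficient for the theorem as stated, though it does not recover the completeness information the elimination computation provides. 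Your structural preparations are what make the forward check manageable: the identity $E-E^\top=[(\tr R\cdot I-R)\,t]_\times$ is correct, $\tr E$ is indeed a multiple of $u^\top t$ so the diagonal substitution $E_{kk}=-(E_{ii}+E_{jj})$ is legitimate, and the conjugation-by-permutation argument does collapse the six instances of \eqref{eq:7cubics1} into a single orbit (with $t'=(\det P)\,Pt$ the matrix $E'=PEP^\top$ remains essential, keeps $\delta'=0$ and the same $\tau'$), so checking $(i,j,k)=(1,2,3)$ plus \eqref{eq:7cubics2} suffices. One small slip to fix before the final reduction: with the paper's convention \eqref{eq:quat-to_rot}, reducing $(\tr R\cdot I-R)\,t$ modulo $u^\top t$ gives $a=2\sigma(\sigma t+u\times t)$, not $2\sigma(\sigma t+t\times u)$; your sign corresponds to the opposite quaternion convention, which is immaterial for the theorem itself (it amounts to $u\mapsto-u$) but must be kept consistent throughout the computer-algebra verification.
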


\begin{proof}
By a straightforward computation. Constraints~\eqref{eq:7cubics1} and~\eqref{eq:7cubics2} are derived by using the implicitization algorithm~\cite{CLS}. First, assuming that $R$ is represented by formula~\eqref{eq:quat-to_rot}, we constructed the polynomial ideal $J$ generated by $E_{ij} - ([t]_\times R)_{ij}$, $u^\top t$ and $\|u\|^2 + \sigma^2 - 1$. Then we computed the Gr\"obner basis of $J$ with respect to a lexicographic ordering where the entries of vectors $t$ and $u$ are greater than the entries of matrix $E$ and scalar $\sigma$. Thus we got the elements of the Gr\"obner basis not involving vectors $t$ and $u$, i.e. the basis of the elimination ideal $J\cap \mathbb C[E_{11}, \ldots, E_{33}, \sigma]$. This basis contains all polynomials from~\eqref{eq:det-e}~--~\eqref{eq:e-trace} as well as seven more elements represented by~\eqref{eq:7cubics1} and~\eqref{eq:7cubics2}.
\end{proof}

\section{3P-RA-ST0: Solver Details}

Recall that in Subsect.~\ref{sec:3prast0} we reduced solving the 3P-RA-ST0 problem to finding real roots of the polynomial system $Ax = 0_{23\times 1}$, where $A$ is a coefficient matrix of size $23\times 35$ and $x$ is a monomial vector. The structure of matrix $A$ is shown in Fig.~\ref{fig:mat_3prast0}(a). It can be seen that $A$ has the following block form
\[
A = \begin{bmatrix}U & V \\ W & X\end{bmatrix},
\]
where $U$ is an upper-triangular $10\times 10$ matrix with $1$'s on its main diagonal. Then it follows that $\det U = 1$ and the inverse to $U$ always exists. By elementary row operations, matrix $A$ is equivalent to
\[
\begin{bmatrix}U & V \\ 0_{13\times 10} & B\end{bmatrix},
\]
where matrix $B = X - WU^{-1}V$ of size $13\times 25$ contains all necessary data for computing all solutions of the initial polynomial system. The structure of matrix $B$ is shown in Fig.~\ref{fig:mat_3prast0}(b).

\begin{figure}%[ht]
\centering
\begin{tabular}{cc}
\begin{tabular}{c}
\includegraphics[width=0.5\textwidth]{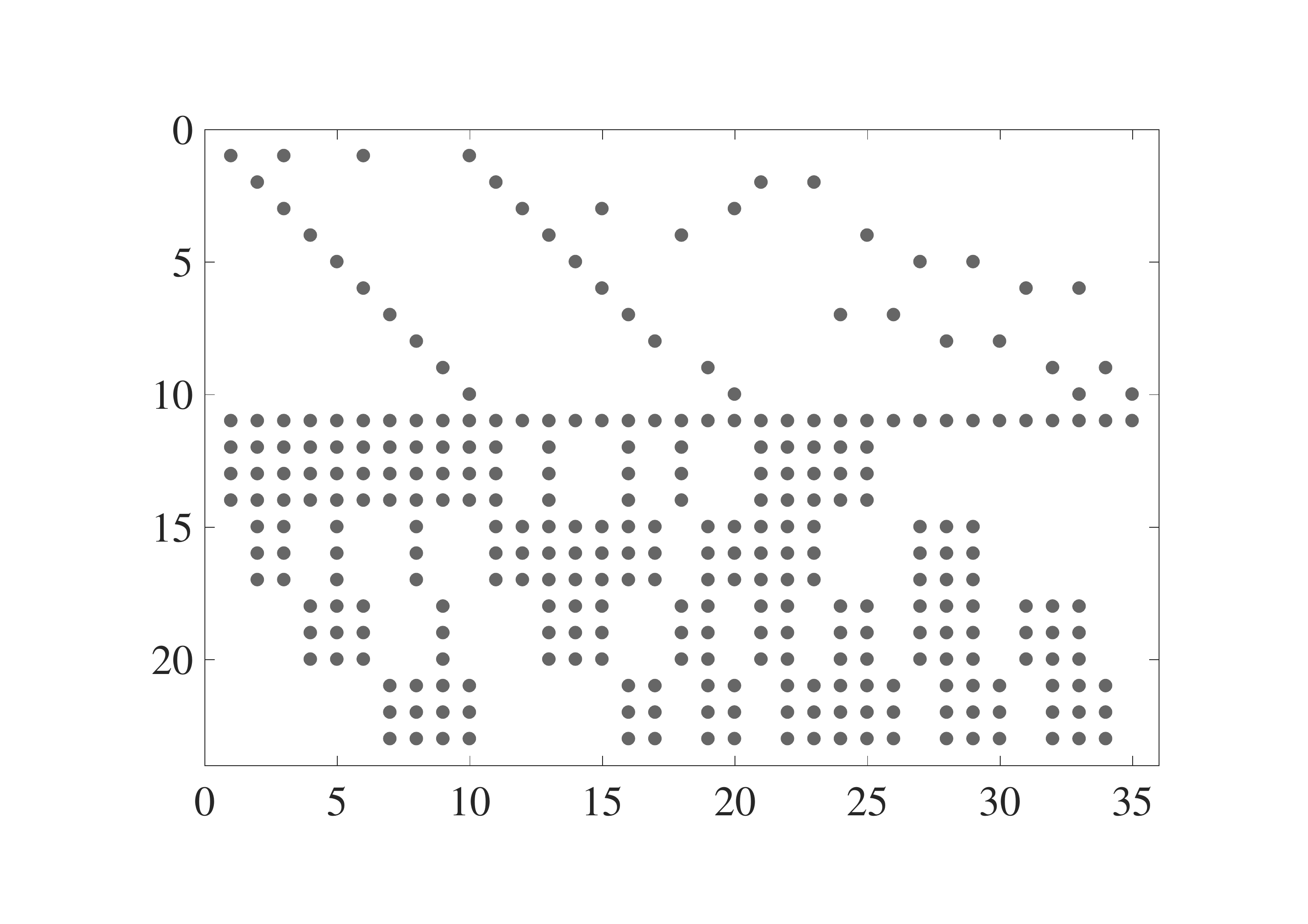} \vspace{-12pt}\\
(a)
\end{tabular} &
\begin{tabular}{c}
\includegraphics[width=0.35\textwidth]{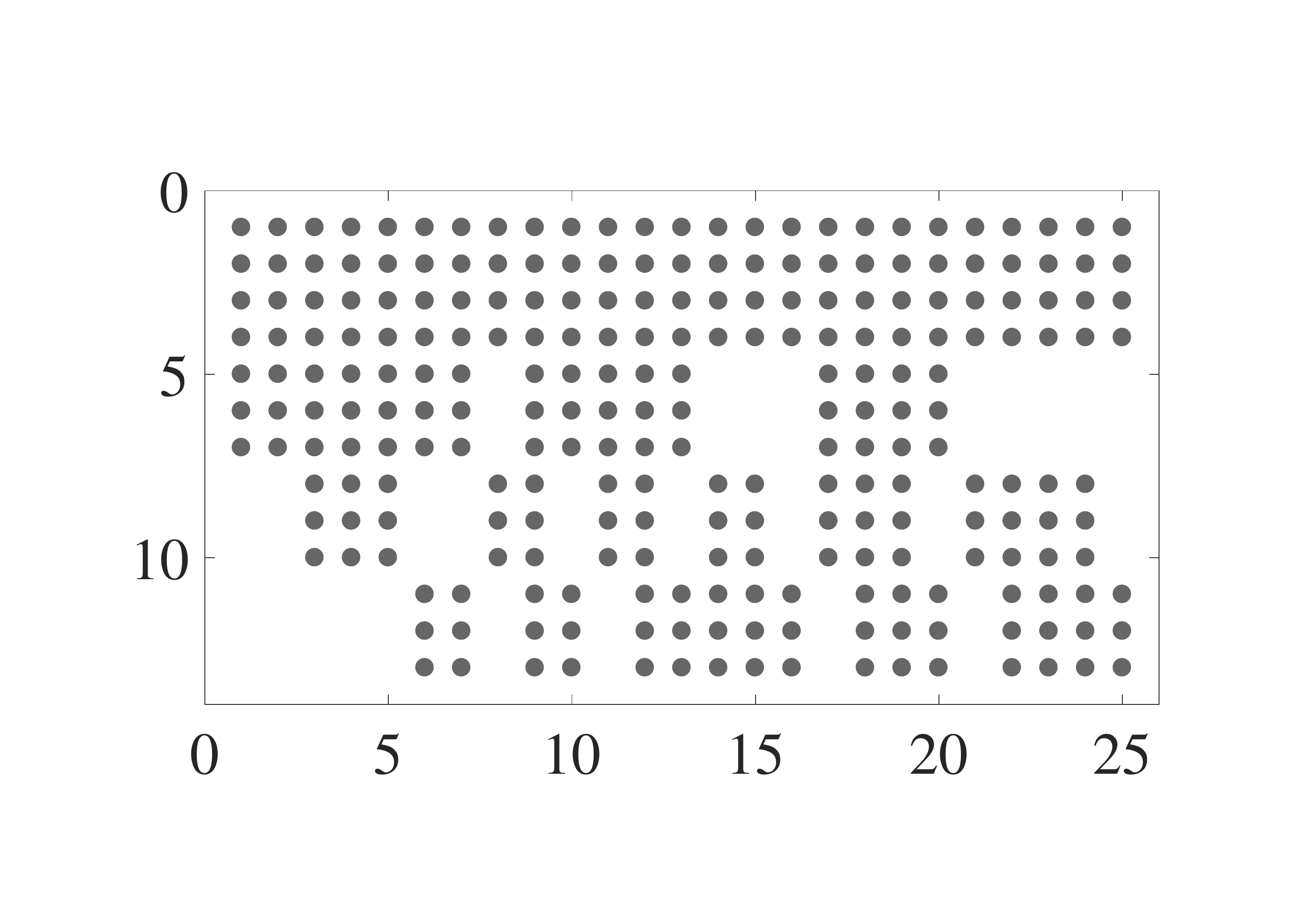} \vspace{-12pt}\\
(b) \vspace{-5pt}\\
\includegraphics[width=0.35\textwidth]{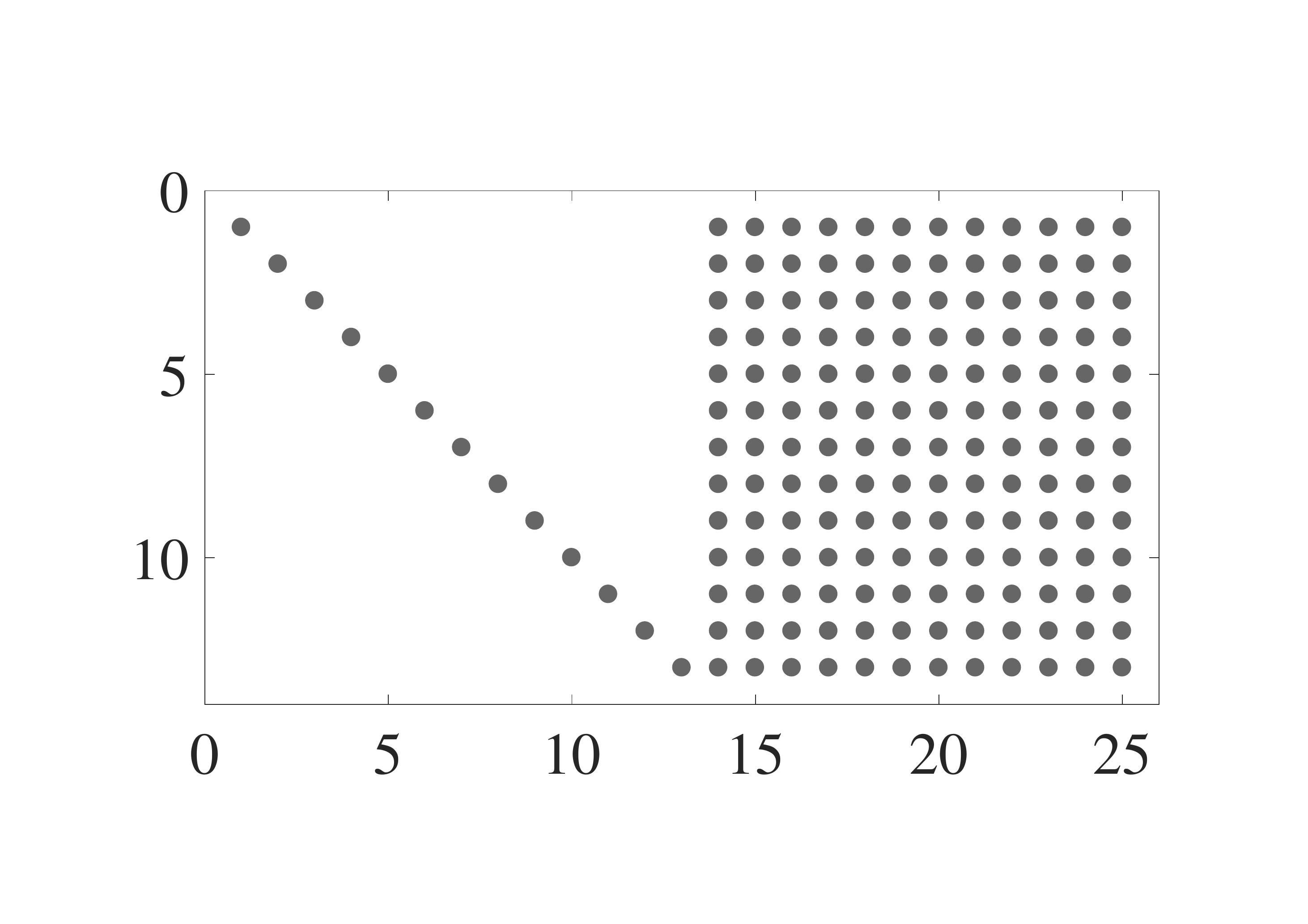} \vspace{-12pt}\\
(c)
\end{tabular}
\end{tabular}
\caption{Left: The sparse structure of $23\times 35$ matrix $A$. The gray circles represent non-zero entries. The $10\times 10$ left upper submatrix of $A$ is upper-triangular and its main diagonal consists of $1$'s. Right top: The final $13\times 25$ elimination template $B$ for computing the 12-th degree univariate polynomial. Right bottom: The reduced row echelon form of $B$}
\label{fig:mat_3prast0}
\end{figure}

It is worth mentioning that matrix $B$ should not be computed by its definition in an implementation. Instead, it is more efficient to use quite simple pre-computed formulas for the nonzero entries of $B$.

The initial polynomial system $Ax = 0_{23\times 1}$ is equivalent to the system $By = 0_{13\times 1}$, where $y$ is a new monomial vector consisting of $25$ monomials. We assume that the hidden variable is $\gamma$. Let the last $17$ monomials in $y$ be
\[
\beta^2\gamma \quad \beta^2 \quad \alpha\beta\gamma^2 \quad \alpha\beta\gamma \quad \alpha\beta \quad \alpha\gamma^2 \quad \alpha\gamma \quad \alpha \quad \beta\gamma^3 \quad \beta\gamma^2 \quad \beta\gamma \quad \beta \quad \gamma^4 \quad \gamma^3 \quad \gamma^2 \quad \gamma \quad 1.
\]
Let $\widetilde B$ be the reduced row echelon form of $B$, see Fig.~\ref{fig:mat_3prast0}(c). If $(\widetilde B)_i$ is the $i$-th row of $\widetilde B$, then we can write
\[
\begin{bmatrix}\gamma(\widetilde B)_{13} - (\widetilde B)_{12}\\ \gamma(\widetilde B)_{12} - (\widetilde B)_{11}\\ \gamma(\widetilde B)_{10} - (\widetilde B)_{9}\end{bmatrix} y = C(\gamma) \begin{bmatrix}\alpha \\ \beta \\ 1 \end{bmatrix} = 0_{3\times 1}.
\]
Here we defined the $3\times 3$ matrix
\[
C(\gamma) = \begin{bmatrix}[3] & [4] & [5]\\ [3] & [4] & [5]\\ [3] & [4] & [5]\end{bmatrix},
\]
where $[n]$ means a univariate polynomial in $\gamma$ of degree $n$. It follows that $\gamma$ is a root if and only if matrix $C(\gamma)$ is degenerate. The problem is thus converted to finding all real roots of the 12-th degree polynomial $p(\gamma) = \det C(\gamma)$. Let $\gamma_0$ be a real root of $p$. The remaining components of vector $u$, i.e. $\alpha$ and $\beta$, are computed from the right null-vector of matrix $C(\gamma_0)$. Hence we get all solutions for vector~$u$.

It is important to note that because of numerical inaccuracy, the solution for $u$ does not exactly satisfy Eq.~\eqref{eq:unit-quaternion}. In order to rectify the solution, we replace $u$ with the vector
\[
\hat u = \frac{\sqrt{1 - \sigma^2}}{\|u\|}\, u.
\]
Then rotation matrix $R$ is computed from the unit quaternion $\begin{bmatrix}\sigma & \hat u^\top\end{bmatrix}$ by formula~\eqref{eq:quat-to_rot}.

Using the rigid motion ambiguity of the world coordinate frame, we set $t' = 0_{3\times 1}$. The translation vector $t''$ is found from the epipolar constraints as the right null-vector of the matrix
\[
\begin{bmatrix}
{q'_1}^\top R^\top [q''_1]_\times\\
{q'_2}^\top R^\top [q''_2]_\times\\
{q'_3}^\top R^\top [q''_3]_\times
\end{bmatrix}.
\]
The scale ambiguity allows us to set $\|t''\| = 1$. Finally, the sign of $t''$ is disambiguated by means of the cheirality constraint~\cite{HZ,Nister}.

\end{document}